\DeclareMathOperator*{\argmin}{arg\,min}
\newcommand{\x}{\mathbf{x}}
\newcommand{\z}{\mathbf{z}}
\newcommand{\dd}{\mathbf{d}}
\newcommand{\vv}{\mathbf{v}}
\newcommand{\y}{\mathbf{y}}
\newcommand{\e}{\mathbf{e}}
\newcommand{\bb}{\mathbf{b}}
\newcommand{\X}{\mathcal{X}}
\newcommand{\Y}{\mathcal{Y}}
\newcommand{\oo}{\mathcal{O}}
\newcommand{\pp}{\mathcal{P}}
\newcommand{\C}{\mathcal{C}}
\newcommand{\su}{\operatorname{supp}}
\newcommand{\dia}{\operatorname{diag}}
\theoremstyle{plain}
\newtheorem{theorem}{Theorem}[section]
\newtheorem{lemma}{Lemma}
\newtheorem{definition}{Definition}
\theoremstyle{remark}
\icmltitlerunning{Lower Bounds for Bilevel Optimization with First-Order Oracles}
\begin{document}

\twocolumn[
  \icmltitle{Lower Complexity Bounds for Nonconvex-Strongly-Convex Bilevel Optimization with First-Order Oracles}

  % It is OKAY to include author information, even for blind submissions: the
  % style file will automatically remove it for you unless you've provided
  % the [accepted] option to the icml2026 package.

  % List of affiliations: The first argument should be a (short) identifier you
  % will use later to specify author affiliations Academic affiliations
  % should list Department, University, City, Region, Country Industry
  % affiliations should list Company, City, Region, Country

  % You can specify symbols, otherwise they are numbered in order. Ideally, you
  % should not use this facility. Affiliations will be numbered in order of
  % appearance and this is the preferred way.
  \icmlsetsymbol{equal}{*}

  \begin{icmlauthorlist}
    \icmlauthor{Kaiyi Ji}{yyy}
    % \icmlauthor{Firstname2 Lastname2}{equal,yyy,comp}
    % \icmlauthor{Firstname3 Lastname3}{comp}
    % \icmlauthor{Firstname4 Lastname4}{sch}
    % \icmlauthor{Firstname5 Lastname5}{yyy}
    % \icmlauthor{Firstname6 Lastname6}{sch,yyy,comp}
    % \icmlauthor{Firstname7 Lastname7}{comp}
    % %\icmlauthor{}{sch}
    % \icmlauthor{Firstname8 Lastname8}{sch}
    % \icmlauthor{Firstname8 Lastname8}{yyy,comp}
    %\icmlauthor{}{sch}
    %\icmlauthor{}{sch}
  \end{icmlauthorlist}

  \icmlaffiliation{yyy}{Department of Computer Science and Engineering, University at Buffalo, New York, United States}
  % \icmlaffiliation{comp}{Company Name, Location, Country}
  % \icmlaffiliation{sch}{School of ZZZ, Institute of WWW, Location, Country}

  \icmlcorrespondingauthor{Kaiyi Ji}{kaiyiji@buffalo.edu}
  % \icmlcorrespondingauthor{Firstname2 Lastname2}{first2.last2@www.uk}

  % You may provide any keywords that you find helpful for describing your
  % paper; these are used to populate the "keywords" metadata in the PDF but
  % will not be shown in the document
  \icmlkeywords{Machine Learning, ICML}

  \vskip 0.3in
]

% this must go after the closing bracket ] following \twocolumn[ ...

% This command actually creates the footnote in the first column listing the
% affiliations and the copyright notice. The command takes one argument, which
% is text to display at the start of the footnote. The \icmlEqualContribution
% command is standard text for equal contribution. Remove it (just {}) if you
% do not need this facility.

% Use ONE of the following lines. DO NOT remove the command.
% If you have no special notice, KEEP empty braces:
\printAffiliationsAndNotice{}  % no special notice (required even if empty)
% Or, if applicable, use the standard equal contribution text:
% \printAffiliationsAndNotice{\icmlEqualContribution}

\begin{abstract}
Although upper bound guarantees for bilevel optimization have been widely studied, progress on lower bounds has been limited due to the complexity of the bilevel structure. In this work, we focus on the smooth nonconvex-strongly-convex setting and develop new hard instances that yield nontrivial lower bounds under deterministic and stochastic first-order oracle models. In the deterministic case, we prove that any first-order zero-respecting algorithm requires at least $\Omega(\kappa^{3/2}\epsilon^{-2})$ oracle calls to find an $\epsilon$-accurate stationary point, improving the optimal lower bounds known for single-level nonconvex optimization and for nonconvex-strongly-convex min-max problems. In the stochastic case, we show that at least $\Omega(\kappa^{5/2}\epsilon^{-4})$ stochastic oracle calls are necessary, again strengthening the best known bounds in related settings. Our results expose substantial gaps between current upper and lower bounds for bilevel optimization and suggest that even simplified regimes, such as those with quadratic lower-level objectives, warrant further investigation toward understanding the optimal complexity of bilevel optimization under standard first-order oracles.

\end{abstract}
\section{Introduction}
In this paper, we are interested in solving the following bilevel optimization problem:
\begin{align}
    &\min_{x\in\mathcal{X}} H(\x):=f(\x;\y^*(\x))\nonumber
    \\ &\text{s.t.}~\y^*(\x)=\argmin_{\y\in\mathcal{Y}} g(\x;\y),
\end{align}
where $\mathcal{X}\subset \mathbb{R}^m$ and $\mathcal{Y}\subset \mathbb{R}^n$ are nonempty closed convex sets. In this paper, we study the smooth nonconvex-strongly-convex bilevel optimization setting, where the lower-level function $g$ is smooth and strongly convex in $\y$, while the upper-level function $f$ is smooth and potentially nonconvex. This formulation captures a variety of modern applications, including meta-learning~\citep{rajeswaran2019meta}, reinforcement learning~\citep{konda2000actor,hong2020two}, robotics~\citep{wang2024imperative}, as well as communication networks and federated learning~\citep{ji2023network,tarzanagh2022fednest,huang2023achieving}.

Recent years have witnessed substantial progress in understanding the convergence and complexity of bilevel optimization. A broad class of works~\citep{ji2021bilevel,hong2020two,chen2021single,dagreou2022framework} analyzes nonconvex-strongly-convex bilevel problems under access to second-order information such as Hessian- and Jacobian-vector products. More recently, there has been growing interest in developing and analyzing \emph{fully first-order} bilevel algorithms that avoid any second-order computations~\citep{shen2023penalty,chen2025near,lu2024first,kwon2023fully,liu2020generic}.

% Although there is a significant study on the upper bound analysis, the progress for the lower bound development is very slow due to the general bilevel optimization formulation. In bilevel optimization, getting a tigher bounds in terms of the condition numbers and the target accuracy $\epsilon$ requires careful constructions, otherwise can  end up with vacuous lower bounds no better than the known lower bounds for single-level optimization. \cite{ji2023lower} provide lower bounds for strongly-convex-strongly-convex and convex-strongly-convex bilevel optimization with second-order oracles, where the hyper-objective function $H(\x)$ is assumed to be strongly convex or convex. They provide a bound that is shown to be $\sqrt{\kappa}$ worse than the lower bounds of the same type of min-max problems, where $\kappa$ is the condition number for the lower-level function. However, they focus only on the deterministic case, and assuming convexity for the hyper-objective function is less obvious. Recently, \cite{kwon2024complexity} provide lower bound for nonconvex-strongly-convex bilevel optimization with so called $\y^*$-aware stochastic first-order oracles, where the oracle returns a $\hat\y$ that is $\epsilon$ close to $\y^*$. However, the lower bounds for standard (stocahstic) first-order oracles for upper- and lower-level functions $f,g$ still remain open. In this paper, we provide a further step towards developing nontrivial lower bounds for nonconvex-strongly-convex bilevel optimization with first-order oracles. Our detailed contributions are summarized below:
Although upper-bound complexity analyses for bilevel optimization have been extensively studied, progress on establishing tight \emph{lower bounds} has been much slower. This difficulty largely stems from the intrinsic complexity of the general bilevel formulation. In particular, deriving meaningful lower bounds that capture the dependence on condition numbers and the target accuracy~$\epsilon$ requires carefully constructed hard instances; otherwise, one risks obtaining vacuous bounds that are no stronger than classical single-level lower bounds. 
\citet{ji2023lower} establish lower bounds for strongly-convex--strongly-convex and convex--strongly-convex bilevel problems under second-order oracle access, assuming that the hyper-objective $H(\x)$ is convex or strongly convex. Their results reveal a gap of a factor $\sqrt{\kappa}$ compared to corresponding lower bounds for min--max optimization under analogous assumptions, where $\kappa$ denotes the condition number of the lower-level function. However, their analysis is limited to the deterministic setting, and the convexity assumptions on the hyper-objective may be restrictive for general bilevel problems. 
\citealt{dagreou2024lower} derive a lower bound of $\Omega(n + \sqrt{n}\,\epsilon^{-2})$ for finite-sum nonconvex--strongly-convex bilevel problems. This bound, however, does not reflect the dependence on condition numbers and is weaker than the known lower bounds for min-max problems of the same type~\citep{zhang2021complexity}. 
More recently, \citet{kwon2024complexity} establish lower bounds for nonconvex--strongly-convex bilevel optimization under a so-called $\y^\ast$-aware stochastic first-order oracle, which returns an estimate $\hat{\y}$ that is $\epsilon$-close to the exact lower-level solution $\y^\ast$. This oracle effectively reduces the problem to one resembling single-level optimization. Nevertheless, lower bounds for standard (stochastic) first-order oracles that directly access the upper- and lower-level functions $f$ and $g$ remain an open problem.

In this paper, we take a further step toward reducing this gap by developing nontrivial lower bounds for smooth nonconvex--strongly-convex bilevel optimization under standard first-order oracle models. Our main contributions are summarized below.

\vspace{-0.2cm}

\begin{itemize}

\item \textbf{Deterministic setting.}
We construct a hard instance on which no first-order zero-respecting algorithm can find an $\epsilon$-stationary solution using fewer than 
$\Omega(\kappa^{3/2}\epsilon^{-2})$ first-order oracle calls for smooth nonconvex-strongly-convex bilevel problems. 
In comparison, the optimal lower bounds for related settings are 
$\Omega(\epsilon^{-2})$ for general smooth nonconvex single-level optimization~\citep{carmon2020lower} 
and $\Omega(\sqrt{\kappa} \epsilon^{-2})$ for smooth nonconvex-strongly-convex min--max optimization~\citep{li2021complexity}. 
Our result improves these bounds by factors of $\kappa^{3/2}$ and $\kappa$, respectively.

On the upper-bound side, \citet{chen2025near} propose a first-order penalty method achieving a convergence rate of order $\kappa^{4}\epsilon^{-2}$, which can be reduced to $\kappa^{3.5}\epsilon^{-2}$ through a naive application of Nesterov acceleration. 
However, even when compared with our lower bound, there remains a gap of order $\kappa^{2}$, indicating substantial room for future improvements.

\vspace{-0.1cm}

\item \textbf{Stochastic setting.}
We further construct an instance showing that no first-order zero-respecting algorithm can achieve an $\epsilon$-stationary solution with fewer than 
$\Omega(\kappa^{5/2}\epsilon^{-4})$ stochastic oracle calls under bounded variance assumptions.
For comparison, the lower bound for standard smooth nonconvex single-level stochastic optimization is $\Omega(\epsilon^{-4})$~\citep{arjevani2023lower}, 
and for smooth nonconvex--strongly-convex min--max optimization it is $\Omega(\kappa^{1/3}\epsilon^{-2})$~\citep{li2021complexity}. 
Our result improves upon these by factors of $\kappa^{5/2}$ and $\kappa^{13/6}$, respectively.
Compared with the $\Omega(\epsilon^{-6})$ upper bound established by \citet{kwon2024complexity}, a gap still remains.

\vspace{-0.1cm}

\item \textbf{Implications.}
Our constructions demonstrate that nontrivial lower bounds for nonconvex-strongly-convex bilevel optimization are indeed possible and are significantly stronger than the known results for single-level and min--max problems. 
Nevertheless, substantial gaps persist between current upper and lower bounds, even in this restricted setting. 
Motivated by our findings, we suggest that closing these gaps may require first studying the simpler yet meaningful case in which the lower-level function is {\bf quadratic}. 
Our lower bounds continue to apply in that regime, but obtaining tighter upper bounds in this setting remains largely unexplored and not yet well understood.
 We hope that the results presented in this paper offer valuable insights for future progress in this direction.

% obtaining tighter upper bounds, possibly by using higher order derivatives of the lower level solution map, appears to be a promising and tractable direction. 

\vspace{-0.2cm}

\end{itemize}

\section{Related Works}

{\bf Bilevel optimization algorithms.} 
Bilevel optimization has a long history dating back to the seminal work of \citet{bracken1973mathematical}. Early studies 
\citep{hansen1992new, shi2005extended} approached bilevel programs from a constrained optimization perspective, motivating the development of KKT-based reformulations and related techniques.  
More recently, gradient-based bilevel optimization has attracted significant attention due to its efficiency and scalability in modern machine learning applications. A major class of gradient-based approaches is the family of Approximate Implicit Differentiation (AID) methods 
\citep{domke2012generic, liao2018reviving, ji2021bilevel, dagreou2022framework, yang2024tuning}, which compute the hypergradient via implicit differentiation and approximate the resulting linear system using iterative solvers.  
In contrast, Iterative Differentiation (ITD) methods 
\citep{maclaurin2015gradient, franceschi2017forward} estimate hypergradients by unrolling the lower-level optimization and applying automatic differentiation in either forward or reverse mode.  
Building upon these ideas, a number of stochastic bilevel algorithms have been developed using Neumann-series approximation 
\citep{chen2021single, ji2021bilevel}, recursive momentum techniques 
\citep{yang2021provably, guo2021randomized}, and variance-reduction mechanisms \citep{yang2021provably}.  
All such methods rely on second-order information, commonly in the form of Hessian–vector or Jacobian–vector products.  
A comprehensive overview is provided in the survey~\citep{liu2021investigating}.

Recently, growing interest has shifted slightly toward designing \emph{first-order} bilevel optimization methods that use only (stochastic) first-order oracles, thereby avoiding explicit second-order computations.  
Representative examples include penalty-based methods 
\citep{shen2023penalty, lu2024first, kwon2023fully,jiang2025beyond,chen2025near}, 
primal–dual frameworks \citep{sow2022constrained}, 
finite-difference Hessian–vector approximation techniques \citep{yang2023achieving}, 
value-function-based approaches \citep{liu2020generic, liu2021towards, liu2021value},  
barrier-based formulations \citep{liu2022bome},  
and min–max optimization based methods \citep{lu2026solving, wang2023effective}.  
These works collectively highlight the potential of first-order bilevel algorithms to achieve competitive performance while significantly reducing computational overhead.

\vspace{0.2cm}
\noindent{\bf Upper bound analysis.} 
A large body of work, including \citealt{ji2021bilevel,hong2020two,chen2021single}, studies AID- and ITD-type algorithms for nonconvex--strongly-convex bilevel optimization. Another line of research considers cases where the lower-level objective is not strongly convex; for example, \citealt{arbel2022non,liu2021towards} analyze settings in which the lower-level solution is characterized through a selection map (e.g., the output of a particular algorithm). For bilevel algorithms that rely solely on (stochastic) first-order oracles, \citealt{kwon2023fully,chen2025near} establish convergence guarantees for nonconvex--strongly-convex formulations. \citealt{shen2023penalty,chen2024finding} study algorithms under weaker structural assumptions on the lower-level problem, extending beyond strong convexity.

% A large body of works such as \cite{ji2021bilevel,hong2020two,chen2021single} analyze AID or ITD type of methods for nonconvex-strongly-convex bilevel optimization. Another set of works analyze the bilevel optimization algorithms when the lower-level problem lacks the strong-convexity assumption. For example,  \cite{arbel2022non,liu2021towards} analyzes the setting where the lower-level solutions depend on a selection map (e.g., output of some algorithm). 
% For bilevel optimization algorithms using only (stochastic) first-order oracles, \cite{kwon2023fully,chen2025near} provide convergence rate analysis for nonconvex-strongly-convex bilevel optimization. 
% \cite{shen2023penalty,chen2024finding} analyzed the setting when the lower-level problem satisfies the Polyak-Łojasiewicz (PL) or PL-like conditions. 

\vspace{0.2cm}
\noindent{\bf Lower bound analysis.} 
Foundational lower bounds for first-order optimization were established by Nemirovski and Nesterov and are presented in their textbooks~\citep{nemirovsky1992information,nesterov2018lectures}. A central concept in this theory is the notion of \emph{zero-chains}, which ensure that any zero-respecting first-order method can activate coordinates only sequentially. Recent works have significantly advanced these constructions in the context of smooth nonconvex optimization~\citep{fang2018spider,carmon2020lower,carmon2021lower,arjevani2023lower}. Building upon these developments, \citet{li2021complexity} establish lower bounds for nonconvex--strongly-convex min--max optimization. Our work is highly inspired by these results.

% Our work follows the complexity framework introduced by Nemirovski in the 1980s. 
% The most classical lower bounds are due to Nemirovski and Nesterov, and are discussed in their textbooks~\citep{nemirovsky1992information,nesterov2018lectures}. 
% The central idea is known as \emph{zero-chains}, for which all first-order zero-respecting methods activate the coordinates one by one. 
% Recently, much progress has been made~\citep{fang2018spider, carmon2020lower,arjevani2023lower,carmon2021lower} by extending the original analysis to the nonconvex setting. \cite{li2021complexity} build on these components to provide a lower bound for the nonconvex-strongly-convex min-max optimization. 
% Our work builds upon these results.
% Our work follows the classical complexity framework introduced by Nemirovski in the 1980s. 
% Foundational lower bounds for first-order optimization were established by Nemirovski and Nesterov, and are presented in their textbooks~\citep{nemirovsky1992information, nesterov2018lectures}. 
% A key concept in this line of work is the notion of \emph{zero-chains}, wherein any zero-respecting first-order method can only activate coordinates sequentially. 
% Substantial progress has been made in recent years by extending these constructions to smooth nonconvex optimization~\citep{fang2018spider, carmon2020lower, carmon2021lower, arjevani2023lower}. 
% \citet{li2021complexity} further extend the analysis to derive lower bounds for nonconvex–strongly-convex min–max optimization. 
% Our work builds on these results. 

Lower bounds for bilevel optimization are relatively underexplored. \citet{ji2023lower} derive bounds for convex and strongly-convex bilevel problems using second-order oracles. More recently, \citet{kwon2024complexity} establish lower bounds for nonconvex-strongly-convex bilevel problems under a $\y^*$-aware stochastic oracle. \citealt{dagreou2024lower} derive a lower bound for finite-sum nonconvex-strongly-convex bilevel problems. In contrast, we provide lower bounds for nonconvex-strongly-convex bilevel optimization using standard (stochastic) first-order oracles.

{\bf A concurrent work.} As we were preparing the final draft of this paper, we became aware of a concurrent nice work by \citet{chen2025condition}, which was posted on arXiv. This work also establishes lower bounds for nonconvex-strongly-convex bilevel optimization under (stochastic) first-order oracle access, showing a larger dependence on the condition number than those of min-max and single-level minimization problems of the same type. 
Despite addressing a similar question, the constructions in the two works differ substantially. For example, the construction introduces an additional auxiliary variable $z$, whereas our construction is simpler without such variable. In the stochastic setting, \citet{chen2025condition} eliminate the coupling variable $\y$ to reduce the dimensionality, while we instead use two bounded hypercubes to control the noise variances. 
% Moreover, in their construction (see Equation~(14) in \citealt{chen2025condition}), the smoothness parameter of $g(\x;\cdot)$ scales with the strong-convex parameter $\mu$. If this dependence were incorporated into our analysis, our lower bound would scale as $\Omega(\kappa^{4.5}\epsilon^{-4})$, which is stronger than the $\Omega(\kappa^{4}\epsilon^{-4})$ lower bound from \citet{chen2025condition}.

% The lower bound analysis for bilevel optimization is much less studied. \cite{ji2023lower} provide lower bounds for strongly-convex-strongly-convex and convex-strongly-convex bilevel optimization with second-order oracles, where the hyper-objective function $H(\x)$ is assumed to be strongly convex or convex. They provide a bound that is shown to be worse than the lower bounds of the same type of min-max problems. However, they focus only on the  deterministic case, and assuming convexity for the hyper-objective function is less obvious. Recently, \cite{kwon2024complexity} provide lower bound for nonconvex-strongly-convex bilevel optimization with so called $\y^*$-aware stochastic first-order oracles, where the oracle returns a $\hat\y$ that is $\epsilon$ close to $\y^*$. Our work provides lower bounds for nonconvex-strongly-convex bilevel optimization with standard (stochastic) first-order oracles.   

\section{Preliminaries}

\noindent {\bf Notations.} We use bold lower-case letters to denote vectors and regular lower-case letters to denote scalars. 
For a vector $\x \in \mathbb{R}^d$, we use $\x^{t}$ to denote its value at the $t^{th}$ iteration, and $x_i$ to denote its $i$th coordinate and define its support as 
$\su(\x) := \{\, i \mid x_i \neq 0 \,\}.$
We use $\|\x\|_2 = \sqrt{\sum_{i=1}^d x_i^2}$ and 
$\|\x\|_\infty = \max_{1 \le i \le d} |x_i|$ to denote the $\ell_2$ and $\ell_\infty$ norms, respectively. 
For a matrix $M \in \mathbb{R}^{m \times n}$, we use $M_{i,j}$ to denote its $(i,j)$th entry. 
We use $\|M\|_{\infty} = \max_{1 \le i \le m}\sum_{j=1}^n |M_{i,j}|$ for the matrix infinity norm and 
$\|M\|_2$ for its spectral norm. 
For a square matrix $M$, we let $\dia_m(M)$ denote the block diagonal matrix with $m$ identical copies of $M$ on the diagonal. 
We use standard asymptotic notation $\mathcal{O}(\cdot)$, $\Omega(\cdot)$, and $\Theta(\cdot)$.

\subsection{Function Class}
In this paper, we focus on the class of smooth nonconvex-strongly-convex bilevel problems that satisfy the standard assumptions used in first order bilevel optimization.

% Following existing analyses in first-order bilevel optimization, we adopt the following standard assumptions:
% The first one is the global smoothness properties:
\begin{definition}\label{def:fc}
Given $L_f,L_g\geq\mu>0$, $C\geq 0$ and $\Delta>0$, define $\mathcal{F}(L_f,L_g,\mu,\Delta)$ to be the set of function pairs $\{f,g\}$ such that  $f:\mathcal{X}\times \mathcal{Y} \rightarrow \mathbb{R}$ and $g:\mathcal{X}\times \mathcal{Y} \rightarrow \mathbb{R}$ for some nonempty closed convex sets $\mathcal{X}\subset \mathbb{R}^m$ and $\mathcal{Y}\subset \mathbb{R}^n$ for all $m,n\in\mathbb{N}$, which satisfy the following assumptions:
% \begin{assumption}[Smoothness]\label{assump:first}
% The functions $f$ and $g$ satisfy the following smoothness conditions.
\begin{enumerate}
    \item Functions $f, g$ are continuously differentiable,  $L_f$ and $L_g$-smooth respectively, jointly in $(\x,\y)$ over $\X\times\Y$.
    \item For every $(\x,\y) \in \X\times\Y$, there exists a numerical constant $C\geq 0$ such that 
        $\|\nabla_y f(\x,\y)\|_2 \le C$.

    \item For every $\x \in \X$, $g(\x,\cdot)$ is $\mu$-strongly-convex in $\y$, that is, for any $\y_1,\y_2\in\mathcal{Y}$,
    \begin{align*}
        g(\x;\y_1) \geq& g(\x;\y_2) + \langle \nabla_\y g(\x;\y_2), \y_1-\y_2 \rangle \\+& \frac{\mu}{2} \|\y_1-\y_2\|^2_2.
    \end{align*}
    
        \vspace{-0.5cm}

    \item There exists a numerical constant $\rho\geq 0$ such that the second-order derivatives $\nabla^2_{\x,\y} g$ and $\nabla^2_{\y,\y} g$ are well-defined and $\rho$-Lipschitz jointly in $(\x,\y)$ for all $(\x,\y) \in \X\times\Y$.
    \item $H(\mathbf{0})-\min_{\x\in\X}H(\x)\leq \Delta$, where $H(\x) := f(\x;\y^*(x))$ is the hyper-objective function. 
\end{enumerate}
Note that for items~2 and~4, we only require the existence of numerical constants $C,\rho=\oo(1)$.
\end{definition} 
Although we do not explicitly specify $L_g$ and $L_f$ in this paper, we are primarily interested in the regime where these constants are independent of the strong convexity parameter $\mu$ and the target accuracy $\epsilon$.

% We do not impose explicit numerical constants in item 4 of \Cref{def:fc}, as determining them for the worst-case strongly convex instances is nontrivial. We leave this extension to future work.

\subsection{Algorithm Class}
We focus on algorithms that solve bilevel optimization problems using (stochastic) first order oracles. 
For clarity of presentation, we first define the (stochastic) first-order oracles considered in this work.
\begin{definition}[Deterministic first-order oracle]
    The deterministic first-order oracle of a differentiable function $f:\X\rightarrow \mathbb{R}$ is a mapping $O: \x \mapsto (f(\x),\nabla f(\x))$ for $\x\in\X$.
\end{definition}

\begin{definition}[Stochastic first-order oracle]
     The stochastic first-order oracle of a differentiable function $f:\X\rightarrow \mathbb{R}$ is a mapping $O: \x \mapsto (f(\x),G_f(\x;\xi))$ for $\x\in\X$, where $\xi$ is a random variable satisfying $\mathbb{E}_\xi \left[G_f(\x;\xi)\right]=\nabla f(\x)$ and $\mathbb{E}_\xi\|G_f(\x;\xi)-\nabla f(\x)\|^2_2\leq \sigma_f^2$.
\end{definition}
Note that the algorithms rely on first-order oracles for both the upper- and lower-level objectives $f$ and $g$. 
In the stochastic setting, we assume for simplicity that the variances of the stochastic first-order oracles are identical, i.e., $\sigma_f = \sigma_g = \sigma$. 
We further focus on first-order bilevel algorithms that satisfy the following zero-respecting property:

\begin{definition}[Algorithm class]\label{de:alc}
    For upper- and lower-level objective functions $f:\X\times\Y\rightarrow \mathbb{R}$ and $g:\X\times\Y\rightarrow \mathbb{R}$ and their  first-order oracles $O_f:(\x,\y) \mapsto (f(\x;\y),\nabla f(\x;\y))$ and $O_g: (\x,\y) \mapsto (g(\x;\y),\nabla g(\x;\y))$, the $(t+1)$-th iterate $(\x^{t+1},\y^{t+1})$ satisfies:
    \begin{align}\label{eq:span}
        \x^{t+1} \in & \bigg\{\pp_{\X}(\mathbf{u}): \su(\mathbf{u}) \subset \bigcup_{0\leq i \leq t} (\su(\x^i)\cup \nonumber
        \\&\su (\nabla_\x f(\x^i;\y^i))\cup \su (\nabla_\x g(\x^i;\y^i) )
        \bigg\}; \nonumber
        \\ 
        \y^{t+1} \in & \bigg\{\pp_{\Y}(\mathbf{v}): \su(\mathbf{v}) \subset \bigcup_{0\leq i \leq t} (\su(\y^i)\cup \nonumber
        \\&\su (\nabla_\y f(\x^i;\y^i))\cup \su (\nabla_\y g(\x^i;\y^i) )  \bigg\}.
    \end{align}
   A similar definition applies in the stochastic setting, where the gradients $\nabla f$ and $\nabla g$ are replaced by their corresponding stochastic first-order oracles.
\end{definition}
Note that the subspaces defined in \Cref{eq:span} permit both simultaneous and alternating updates of $\x$ and $\y$, thereby including single-loop and double-loop bilevel optimization algorithms. Consequently, the algorithm class introduced in \Cref{de:alc} covers all existing first-order bilevel optimization methods, including but not limited to penalty-based approaches~\citep{shen2023penalty,lu2024first}, primal–dual methods~\citep{sow2022constrained}, finite-difference Hessian–vector–approximation methods~\citep{yang2023achieving}, value-function-based approaches~\citep{liu2020generic, liu2021towards, liu2021value}, and barrier-based methods~\citep{liu2022bome}.

% \end{assumption}
% In addition to the smoothness of the individual objective functions, bilevel optimization also requires the hyper-objective to be smooth. This can be indirectly ensured by assuming Lipschitz continuity of the second-order derivatives of $g$:

% \begin{assumption}\label{assump:second}[Lipschitzness of Second Derivatives of $g$] There exists a numerical constant $L\geq 0$ such that 
% the derivatives $\nabla^2_{\x,\y} g$ and $\nabla^2_{\y,\y} g$ are well-defined and $L$-Lipschitz jointly in $(\x,\y)$ for all $(\x,\y) \in \mathbb{R}^{m \times n}$.
% \end{assumption}
% Note that $l,C $ and $L$ in Assumptions~\ref{assump:first} and \ref{assump:second} are all the numerical constants independent of the strong-convexity parameter $\mu$ and the target accuracy $\epsilon$.

\vspace{-0.2cm}
\section{Lower Bounds in Deterministic Setting}
\vspace{-0.1cm}

\subsection{Useful Techniques for Lower-Bound Construction}

In this paper, we focus on the bilevel optimization setting where the lower-level function $g(\x;\y)$ is strongly convex in $\y$, while the upper-level function $f(\x;\y)$ is smooth and possibly nonconvex. For this reason, our constructions draw on key techniques and components from the worst-case instances of \citealt{nesterov2018lectures} for smooth strongly convex functions and \citealt{carmon2020lower} for smooth nonconvex functions. Their core idea is to make sure their instances satisfy the following notion of zero-chain property: 
\begin{definition}[Zero-chain]
A function $f:\X\subset\mathbb{R}^d\to\mathbb{R}$ is a (first-order) zero-chain if for every $1\le i\le d$,
\begin{align*}
   & \operatorname{supp}(\x) := \{i : x_i \neq 0\} \subset \{1,\ldots,i-1\}
\\&\qquad\Longrightarrow\;\;
\operatorname{supp}(\nabla f(\x)) \subset \{1,\ldots,i\}.
\end{align*}
\end{definition}
\vspace{-0.3cm}

Consider running a first-order algorithm on a zero-chain function, starting from the initialization 
$\x = 0$, and assume access to a deterministic
first-order oracle. By the zero-chain property, each iteration can introduce at most one new
nonzero coordinate of $\x$—that is, each iteration ``activates'' at most one additional coordinate.
Consequently, after $t$ iterations we must have 
$\operatorname{supp}(\x^t)\subset\{1,\ldots,t\}.$ Therefore, if a good solution requires that at least $T$ coordinates be discovered, then any
deterministic first-order method must take at least $T$ iterations, which yields a lower bound
of order $T$ on the algorithm's complexity.

Following this strategy, \citealt{nesterov2018lectures} and \citealt{carmon2020lower} provide the following key components for their constructions in strongly-convex and nonconvex settings: 

\vspace{-0.3cm}

% In particular, our constructions rely on the following components:
\begin{itemize}
    \item {\bf Tri-diagonal matrix $A$.} Following \citealt{nesterov2018lectures,li2021complexity},  we use the following tri-diagonal 1-D discrete Laplacian matrix $A\in\mathbb{R}^{n\times n}$ to construct the strongly-convex lower-level instance:
\begin{align}\label{Amatrix}
  A:=  \begin{bmatrix}
 1& -1 &  &  & \\
 -1& 2  &-1 &  & \\
 &   \ddots & \ddots & \ddots  & \\
  && -1& 2 & -1 \\
  &  & & -1 & 1\\ 
\end{bmatrix},
\end{align}
where it is verified that $A$ is positive semidefinite and $\|A\|_2\leq 4$. Due to its tri-diagonal nature, it is easily verified that if $\su(x)\subset{1,...,i-1}$, then $Ax\subset\{1,...,i\}$. In other words, if a vector has nonzero entries only at its first $i-1$ coordinates, then multiplying it by $A$ can activate at most one additional coordinate, namely the $i$-th one.
\item {\bf $\Psi(\cdot)$ and $\Phi(\cdot)$ hardness functions.} Following the construction in \citealt{carmon2020lower}, we employ the component functions $\Psi(x):\mathbb{R} \rightarrow \mathbb{R}$ and $\Phi(x):\mathbb{R} \rightarrow \mathbb{R}$ defined below.
\begin{align}\label{eq:phipsi}
\Psi(x) := &
\begin{cases}
0, & x \le \tfrac{1}{2}, \\[6pt]
\exp\!\left( 1 - \frac{1}{(2x-1)^2} \right), & x > \tfrac{1}{2},
\end{cases} \nonumber
\\ 
\Phi(x) :=& \sqrt{e}\, \int_{-\infty}^{x} e^{-\tfrac{1}{2} t^{2}} \, dt,
\end{align}
which have the following key properties that will be used in our analysis. 
\begin{lemma}[\citealt{carmon2020lower}, Lemma 1]\label{le:phipsi}
The functions $\Phi$ and $\Psi$ satisfy
\begin{enumerate}
  \item For all $x \le \tfrac{1}{2}$ and $k \in \mathbb{N}$, we have
      $\Psi^{(k)}(x) = 0$, where $\Psi^{(k)}$ denotes the $k^{th}$-order derivative. 

  \item For all $x \ge 1,|y| < 1$, we have $\Psi(x)\,\Phi'(y) > 1$.

  \item Both $\Psi$ and $\Phi$ are infinitely differentiable. For all $k \in \mathbb{N}$, it holds that
  \begin{align*}
      \sup_x \bigl|\Psi^{(k)}(x)\bigr|
      \le \exp\!\left(\frac{5k}{2}\log(4k)\right)
      \\
      \sup_x \bigl|\Phi^{(k)}(x)\bigr|
      \le \exp\!\left(\frac{3k}{2}\log\!\frac{3k}{2}\right).
  \end{align*}

  \item The functions and derivatives $\Psi$, $\Psi'$, $\Phi$, $\Phi'$ are nonnegative and bounded, with 
    $  0 < \Psi < e,
      0 < \Psi' < \sqrt{\tfrac{54}{e}},
      0 < \Phi < \sqrt{2\pi e},
      0 < \Phi' < \sqrt{e}.$
\end{enumerate}
\end{lemma}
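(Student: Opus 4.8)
The plan is to verify the four items directly from the closed forms in \Cref{eq:phipsi}; the argument is essentially that of \citet[Lemma 1]{carmon2020lower}, reproduced here for self-containedness. Two reductions are convenient. First, with $s = 2x-1$ we write $\Psi(x) = e\cdot\phi(s)$, where $\phi(s):=\exp(-1/s^2)$ for $s>0$ and $\phi(s):=0$ for $s\le 0$; introducing $v = 1/s$ and using the operator identity $\frac{d}{ds} = -v^2\frac{d}{dv}$, an easy induction gives $\phi^{(k)}(s) = q_k(v)\,e^{-v^2}$ with $q_0\equiv 1$ and $q_{k+1} = 2v^3 q_k - v^2 q_k'$, so that $\deg q_k = 3k$ and $\Psi^{(k)}(x) = e\,2^k\,q_k(v)\,e^{-v^2}$. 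Second, $\Phi$ is a primitive of a rescaled Gaussian, so $\Phi^{(k)}(x) = \sqrt{e}\,\frac{d^{k-1}}{dx^{k-1}}e^{-x^2/2}$ for $k\ge 1$. Item~1 is then immediate: $\Psi\equiv 0$ on $(-\infty,\tfrac12]$, and at the junction $x=\tfrac12$ the classical flatness of the bump profile --- $q_k(v)e^{-v^2}\to 0$ as $v\to\infty$, i.e.\ as $s\to 0^+$ --- shows all one-sided derivatives vanish, so $\Psi\in C^\infty$, and smoothness of $\Phi$ is clear. Item~2 follows by substitution: for $x\ge 1$ we have $s\ge 1$, hence $\Psi(x)=\exp(1-1/s^2)\ge 1$, while for $|y|<1$ we have $\Phi'(y)=\sqrt{e}\,e^{-y^2/2}>\sqrt{e}\,e^{-1/2}=1$, so the product exceeds $1$. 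For item~4, $1-1/s^2<1$ gives $0<\Psi<e$ on $x>\tfrac12$; $\Phi$ increases from $0$ to $\sqrt{e}\int_{\mathbb R}e^{-t^2/2}\,dt=\sqrt{2\pi e}$; $\Phi'$ is maximized at $x=0$ with value $\sqrt{e}$; and $\Psi'(x)=4e\,v^3 e^{-v^2}$ is maximized over $v>0$ at $v^2=\tfrac32$, giving $\sup_x\Psi'(x)=4e\,(\tfrac32)^{3/2}e^{-3/2}=\sqrt{54/e}$.

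It remains to bound the high-order derivatives (item~3). For $\Phi$ I would apply Cauchy's estimate to the entire function $z\mapsto e^{-z^2/2}$: on the circle $|z-x|=r$ one checks $\mathrm{Re}(z^2)\ge -r^2$, hence $|e^{-z^2/2}|\le e^{r^2/2}$, so $\big|\tfrac{d^{n}}{dx^{n}}e^{-x^2/2}\big|\le n!\,r^{-n}e^{r^2/2}$; optimizing at $r=\sqrt{n}$ and applying Stirling's formula bounds this by $e\sqrt{n}\,(n/e)^{n/2}$, which with $n=k-1$ lies comfortably below $\exp\!\big(\tfrac{3k}{2}\log\tfrac{3k}{2}\big)$ for every $k\ge 1$. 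For $\Psi$ I would combine $\Psi^{(k)}(x)=e\,2^k q_k(v)e^{-v^2}$ with two facts: (i) the coefficient $\ell_1$-norm satisfies $\|q_k\|_1\le\prod_{i=0}^{k-1}(3i+2)$, since $\|q_k'\|_1\le 3k\|q_k\|_1$ together with the recursion forces $\|q_{k+1}\|_1\le(3k+2)\|q_k\|_1$; and (ii) $\sup_{v>0}v^j e^{-v^2}=(j/2)^{j/2}e^{-j/2}$, which over $0\le j\le 3k$ is at most $\max\{1,(3k/2)^{3k/2}e^{-3k/2}\}$. Hence $\sup_x|\Psi^{(k)}(x)|\le e\,2^k\prod_{i=0}^{k-1}(3i+2)\cdot\max\{1,(3k/2)^{3k/2}e^{-3k/2}\}$, and a short computation checks that this does not exceed $\exp\!\big(\tfrac{5k}{2}\log(4k)\big)$.

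The only real obstacle is the bookkeeping in this last step: one must control the growth of the polynomials $q_k$ through $k$ successive applications of $-v^2\frac{d}{dv}$, and then confirm that the resulting explicit bound genuinely fits under the envelope $\exp(\tfrac{5k}{2}\log(4k))$ for all $k\ge 1$, paying attention to the small-$k$ cases where the multiplicative constant $e\,2^k$ or the $j=0$ term in (ii) is what dominates. Everything else reduces to one-variable calculus together with the flatness of the exponential bump.
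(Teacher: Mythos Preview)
The paper does not prove this lemma at all; it is quoted verbatim from \citet[Lemma~1]{carmon2020lower} and used as a black box. Your proposal is a faithful and correct reconstruction of the original Carmon--Duchi--Hinder--Sidford argument: the substitution $s=2x-1$, $v=1/s$ and the recursion $q_{k+1}=2v^3q_k-v^2q_k'$ for $\Psi$, together with the Cauchy estimate on $e^{-z^2/2}$ for $\Phi$, are exactly the tools used there, and your verifications of items~1, 2, 4 are straightforwardly correct. The only place that demands care is, as you say, the final numerical comparison for $\Psi^{(k)}$; bounding $\prod_{i=0}^{k-1}(3i+2)\le 3^k k!$ and checking $e\cdot 2^k\cdot 3^k k!\cdot\max\{1,(3k/(2e))^{3k/2}\}\le (4k)^{5k/2}$ by taking logarithms handles all $k\ge1$, so nothing is missing.
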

\vspace{-0.3cm}

\citealt{carmon2020lower} use a construction of $f(\x)=\sum_i \big[\Psi(-x_{i-1})\Phi(-x_{i})-\Psi(x_{i-1})\Phi(x_{i})\big]$, which together with $\Psi^\prime(0)=\Psi(0)=0$,  ensures the zero-chain property that if $\x\subset \{1,...,i-1\}$, then $\nabla f(\x)\subset \{1,...,i\}$. Furthermore, as we will show later, the boundedness of $\Psi$, $\Psi'$, $\Phi$, and $\Phi'$ is crucial for constructing a valid worst-case instance within the bilevel class $\mathcal{F}(L_f, L_g, \mu, C, \Delta)$.
\end{itemize}

\vspace{-0.2cm}
\subsection{Main Result: A Lower Bound on First-Order Oracle Complexity}
The following theorem establishes a complexity lower bound for deterministic first-order bilevel algorithms.
\begin{theorem}\label{th:deter}
    % For any $L_f, L_g, \mu, \Delta, \epsilon>0$ satisfying $\kappa=L_g/\mu\geq 1$, there exist an instance $\{f,g\}$ such that $f:\mathbb{R}^m\times \mathbb{R}^n \rightarrow \mathbb{R}$ and  $g:\mathbb{R}^m\times \mathbb{R}^n \rightarrow \mathbb{R}$ in $\mathcal{F}(L_f,L_g,\mu,\Delta)$
%     For any $L_f, L_g, \mu, \Delta, \epsilon > 0$ satisfying $\kappa = L_g / \mu \ge 1$, 
% there exist functions $f : \mathbb{R}^m \times \mathbb{R}^n \to \mathbb{R}$ and 
% $g : \mathbb{R}^m \times \mathbb{R}^n \to \mathbb{R}$ such that $\{f, g\} \in \mathcal{F}(L_f, L_g, \mu, \Delta)$ for some $m,n\in \mathbb{N}$ with their deterministic first-order oracles such that for any first-order bilevel algorithm given in \Cref{eq:span}, to achieve $\|\nabla H(\x)\|_2\geq \epsilon$, there must be 
% \begin{align}
%     \frac{C_0\Delta L_f \kappa^{\frac{3}{2}}}{\epsilon^2}
% \end{align}
% oracles, 
% where $H(\x)=f(\x;\y^*(\x))$ is the hyper-objective function.
For any $L_f, L_g, \mu, \Delta, \epsilon > 0$ satisfying $\kappa = L_g / \mu \ge 1$ and $\frac{\Delta}{L_f}=\oo(1)$, 
there exist functions 
$f : \mathbb{R}^m \times \mathbb{R}^n \to \mathbb{R}$ and 
$g : \mathbb{R}^m \times \mathbb{R}^n \to \mathbb{R}$ 
such that $\{f, g\} \in \mathcal{F}(L_f, L_g, \mu, \Delta)$ for some $m, n \in \mathbb{N}$ with their deterministic first-order oracles. 
For any first-order bilevel algorithm of the form in \Cref{de:alc}, in order to find an $\epsilon$-accurate stationary point $\x$ such that 
$\|\nabla H(\x)\|_2 < \epsilon$, the algorithm must use at least
   $ \frac{C_0 \Delta L_f \kappa^{3/2}}{\epsilon^2}$
oracle calls, 
where $H(\x) = f(\x; \y^*(\x))$ with $y^*(\x)=\argmin_\y g(\x;\y)$ is the hyper-objective, and $C_0$ is a numerical constant. 
\end{theorem}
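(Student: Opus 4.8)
\section*{Proof Proposal}

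The plan is to combine the two classical hard-instance ingredients recalled above into a single bilevel instance. The lower level will be a quadratic in $\y$ built from the tridiagonal Laplacian $A$, of the form $g(\x;\y) = \frac{\mu}{2}\y^\top \dia_m(A)\y/\lambda - \langle \text{(something linear in }\x), \y\rangle$ (up to rescaling), chosen so that $g$ is $L_g$-smooth and $\mu$-strongly convex in $\y$ with condition number $\kappa = L_g/\mu$ of order $\|A\|_2/\lambda_{\min}$; here the key point, as in \cite{nesterov2018lectures,li2021complexity}, is that because $A$ is tridiagonal its inverse is \emph{dense}, so a single activated coordinate of $\x$ only propagates into $\y^*(\x)$ one coordinate at a time through the lower-level oracle. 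The upper level $f(\x;\y)$ will be a Carmon--Hinder--Sidford--type nonconvex zero-chain in the \emph{coupled} variable, i.e. built from the $\Psi,\Phi$ components so that $H(\x) = f(\x;\y^*(\x))$ inherits the nonconvex zero-chain structure from \cite{carmon2020lower}. The first task is therefore to write down $f$ and $g$ explicitly, verify membership in $\mathcal{F}(L_f,L_g,\mu,\Delta)$ using the boundedness properties in \Cref{le:phipsi} (items 3 and 4 for the smoothness of $f$ and the bound on $\|\nabla_\y f\|_2$, item 4 again for $C=\oo(1)$), and check that the second-order Lipschitz condition (item 4 of \Cref{def:fc}) holds with $\rho = \oo(1)$ — trivially, since $g$ is quadratic so $\nabla^2 g$ is constant.

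Next I would nail down the \emph{chaining mechanism}, which is the heart of the argument and the reason the bound beats the min--max case. In the min--max lower bound of \cite{li2021complexity}, activating one coordinate of $\x$ costs $\Theta(\sqrt{\kappa})$ inner gradient steps because the inner maximization over $\y$ needs that many iterations to propagate a coordinate through $A^{-1}$; this gives the $\sqrt\kappa\,\epsilon^{-2}$ bound. In the bilevel setting, the claim is that activating one coordinate of $\x$ costs $\Theta(\kappa^{3/2})$ oracle calls. I would argue this by showing that the hyper-objective $H(\x)$ is an $\Omega(1/\sqrt\kappa)$-scaled version of the Carmon et al.\ nonconvex zero-chain, so that making $\|\nabla H\|_2 < \epsilon$ requires activating $T = \Theta(\Delta L_f/(\sqrt\kappa\,\epsilon^2))$ coordinates of $\x$ (the $\sqrt\kappa$ entering through the scaling needed to keep $H$ inside the function class with the prescribed $\Delta$ and $L_f$); and that under the zero-respecting property of \Cref{de:alc}, each new coordinate of $\x$ can only be activated after its immediate predecessor has propagated through the lower-level problem, which — because the only information about $\y^*(\x)$ the algorithm sees comes from $\nabla_\y g$ and the dense-inverse structure of $A$ — requires $\Theta(\kappa)$ lower-level oracle calls. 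Multiplying $\Theta(\kappa)$ calls-per-coordinate by $\Theta(\Delta L_f/(\sqrt\kappa\,\epsilon^2))$ coordinates yields the claimed $\Omega(\Delta L_f \kappa^{3/2}\epsilon^{-2})$.

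Concretely, the key technical lemma I would isolate and prove is a support-propagation statement: if at iteration $t$ the algorithm's iterates satisfy $\su(\x^t) \subset \{1,\dots,k\}$ and $\su(\y^t)\subset\{1,\dots,\ell\}$ for the relevant block structure, then after one more oracle call the supports can grow by at most one index, and moreover the support of $\x$ cannot grow until the support of $\y$ has ``caught up'' to encode enough of $\y^*$ at the current $\x$ — a catching-up that itself takes $\Omega(\kappa)$ steps by the standard strongly-convex zero-chain argument applied to $g(\x^t;\cdot)$ with a fixed $\x^t$. This is essentially a careful bookkeeping of the zero-chain property through the composition $\x \mapsto (\x,\y^*(\x)) \mapsto f(\x;\y^*(\x))$, using the zero-chain property of $f$ in $(\x,\y)$ jointly together with the one-coordinate-at-a-time propagation through $A^{-1}$. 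The main obstacle I anticipate is exactly this coupling step: unlike the min--max case where the order of operations is clean (inner max, then outer min), here \Cref{de:alc} permits fully simultaneous and adaptive updates of $\x$ and $\y$ using oracles of \emph{both} $f$ and $g$, so I have to rule out the possibility that the algorithm ``shortcuts'' the $\kappa$-step inner propagation by exploiting $\nabla_\x f$, $\nabla_\x g$, or $\nabla_\y f$ information. The resolution is to design $f$ so that its $\x$-gradient and its $\y$-gradient carry no support information beyond what $\nabla_\y g$ already provides — e.g., by making the $\x$-$\y$ coupling in $f$ flow only through the same shift structure — so that the $A^{-1}$ bottleneck genuinely cannot be bypassed. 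Once that lemma is in place, the final counting argument and the choice of dimensions $m,n$ and scaling parameters to match $(L_f,L_g,\mu,\Delta,\epsilon)$ is routine, and I would close by verifying the $\epsilon$-stationarity threshold and assembling the constant $C_0$.
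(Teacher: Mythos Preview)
Your accounting does not add up, and the error points to a real misunderstanding of where the extra $\kappa$ comes from. You claim $\Theta(\kappa)$ inner calls per outer coordinate and $T=\Theta(\Delta L_f/(\sqrt{\kappa}\,\epsilon^2))$ outer coordinates, but $\kappa\cdot\Delta L_f/(\sqrt{\kappa}\,\epsilon^{2})=\Delta L_f\sqrt{\kappa}/\epsilon^{2}$, not $\kappa^{3/2}$. Both factors are wrong in opposite directions. In the actual construction the inner block has dimension $n=\Theta(\sqrt{\kappa})$, so traversing one $\y$-block costs $\Theta(\sqrt{\kappa})$ oracle calls, \emph{not} $\Theta(\kappa)$: the zero-chain argument is a dimension count through the tridiagonal $A$, not a convergence-rate argument, so ``dense inverse'' is irrelevant here. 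Conversely, the outer chain is \emph{longer} by a factor of $\kappa$, not shorter by $\sqrt{\kappa}$: one needs $T=\Theta(\Delta L_f\kappa/\epsilon^{2})$ coordinates of $\x$, and $Tn=\Theta(\kappa^{3/2})$.

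The ingredient you are missing, and which does all the work, is the scaling of the relevant entries of the lower-level inverse. With $\bb_x^{(i)}=x_i\e_n$, the minimizer is $(\y^{(i)})^*=M\bb_x^{(i)}$ with $M\propto(A+n^{-2}I)^{-1}$, and the hyper-objective depends on $x_i$ only through $M_{1,n}x_i$ and $M_{n,n}x_i$. A short eigen-decomposition (\Cref{le:M}) shows $M_{1,n},M_{n,n}=\Theta(n)=\Theta(\sqrt{\kappa})$. Hence $H(\x)$ is the Carmon chain with its argument stretched by $\Theta(n)$, so $\|\nabla^{2}H\|=\Theta(n^{2}L_f)=\Theta(\kappa L_f)$ and the gradient lower bound at an un-activated coordinate is $\Theta(n\lambda L_f)$. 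Matching this to $\epsilon$ forces $\lambda=\Theta(\epsilon/(nL_f))$, and then the $\Delta$ budget gives $T=\Theta(\Delta/(\lambda^{2}L_f))=\Theta(\Delta L_f n^{2}/\epsilon^{2})$. Your proposal never identifies this $M_{1,n}=\Theta(n)$ amplification, which is precisely the bilevel-specific phenomenon that separates the bound from the min--max case; without it, the scheme you sketch can only reproduce the $\sqrt{\kappa}\,\epsilon^{-2}$ min--max lower bound.
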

\citealt{carmon2020lower} establish a lower bound of $\Omega(1/\epsilon^{2})$ for smooth nonconvex optimization, and \cite{li2021complexity} proves a lower bound of $\Omega(\sqrt{\kappa}/\epsilon^{2})$ for smooth nonconvex-strongly-concave min-max optimization. 
Both results can be viewed as special cases of smooth nonconvex-strongly-convex bilevel optimization, for which we obtain in \Cref{th:deter} a much larger lower bound of $\Omega(\kappa^{3/2}/\epsilon^{2})$. 
This demonstrates that bilevel optimization is provably more challenging than min-max optimization. 
This observation is consistent with the fundamental hardness comparison for smooth strongly-convex--strongly-convex bilevel problems established in \citealt{ji2023lower}.

\vspace{-0.2cm}

\begin{figure*}[t]
% \small
  \centering
   \includegraphics[width=0.9\linewidth]{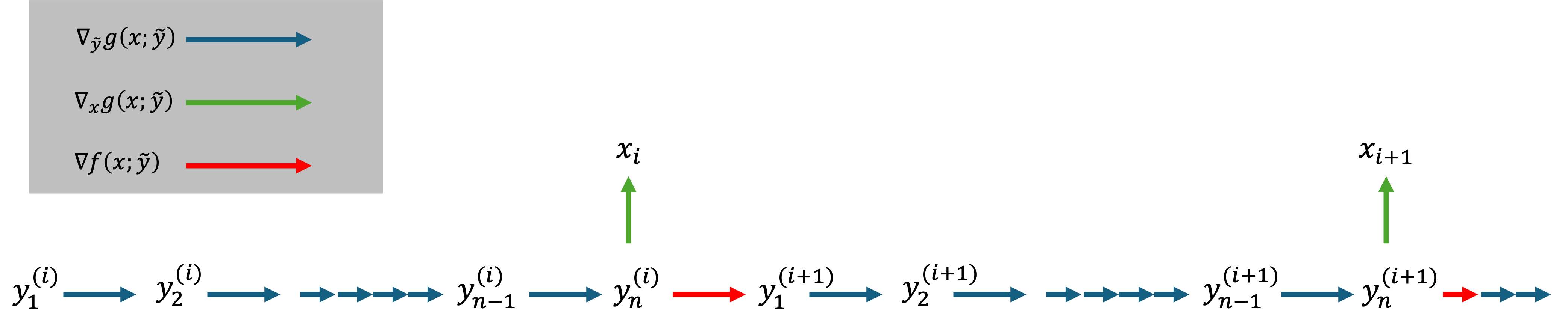}\
   % \hspace{20mm}
   % \includegraphics[width=0.31\linewidth]{figs/gradient_normalization.pdf}\

      % \vspace{-0.1cm}

   \caption{An illustration of the zero-chain for our constructed instance in \cref{obj:construction} for nonconvex-strongly-convex bilevel optimization. } 
   \label{fig:zero_chain}
      % \vspace{-0.2cm}
\end{figure*}

% \vspace{0.2cm}

\subsection{Proof Outline for Deterministic Lower Bound}

We consider the following worst-case instance. For notational simplicity, define $x_0\equiv\frac{\lambda}{C_lM_{n,n}}$.
\begin{align}\label{obj:construction}
f(\x;\mathbf{\widetilde y}) =& \sum_{i=1}^T \frac{\lambda^2L_f}{L}\bigg[
\Psi\Big(-\frac{C_l}{\lambda}y^{(i-1)}_n\Big)\Phi\Big(-\frac{C_r}{\lambda}y^{(i)}_1\Big) \nonumber
\\-&\Psi\Big(\frac{C_l}{\lambda}y^{(i-1)}_n\Big)\Phi\Big(\frac{C_r}{\lambda}y_1^{(i)}\Big) 
\bigg]  
% + \frac{\lambda L_f}{L}y_1^{(0)}
% + \Big(C-\frac{\lambda^2L_f}{L}\Big) y_1^{(0)}
\nonumber  \\
g(\x;\mathbf{\widetilde y}) = & \sum_{i=0}^T \Big[ \frac{L_gn^2}{2(4n^2+1)} 
(\y^{(i)})^\top \Big(\frac{1}{n^2}I_n + A\Big)\y^{(i)} 
\nonumber\\&- L_g(\bb_x^{(i)})^\top\y^{(i)}
\Big],
\end{align}
where $\x=[x_1,...,x_{T}]\in\mathbb{R}^{T}$ is the upper-level variable, $\mathbf{\widetilde y} = [\y^{(0)},\y^{(1)},....,\y^{(T)}]$ with each $\y^{(i)}\in\mathbb{R}^n$ is the lower-level variable, $y^{(i)}_j$ returns the $j^{th}$ coordinate of $\y^{(i)}$, and the dimension $n=\big\lfloor \sqrt{\frac{L_g-\mu}{4\mu}} \big\rfloor$, and the design of $\bb_x^{(i)}$ is most critical, which is given by 
\begin{align*}
    \bb_x^{(i)} &= [0,0,....,x_i ] = x_i  \e_n,
\end{align*}
where $\e_i$ denotes the $i^{\text{th}}$ standard basis vector, whose sole nonzero entry equals $1$. For simple presentation, the numerical constants $C_l, C_r, L$ and the parameter $\lambda$ will be specified at a later stage.

\vspace{0.2cm}
\noindent{\bf Validation of our constructed instance.} We first verify that our constructed instance belongs to the function class $\mathcal{F}(L_f, L_g, \mu, \Delta)$.

\begin{enumerate}
    \item 
First, we need to verify $g(\x;\cdot)$ is $\mu$-strongly convex. Since the matrix $A$ is positive semidefinite, it can be verified that $\nabla^2 g(\x;\cdot)=\frac{L_gn^2}{4n^2+1}\dia_{T+1}(A+\frac{1}{n^2})$. Let $M:=\dia_{T+1}\{A\}$.  For any vector $z \in \mathbb{R}^{n(T+1)}$, write it as a block vector $\z=[\z_1,\z_2,...,\z_m], \z_i \in \mathbb{R}^n$, we have
$\z^\top M \z = \sum_{i=1}^m \z_i^\top A \z_i.$ 
Since $A$ is positive semidefinite, each term $\z_i^\top A \z_i \ge 0$, so the sum is nonnegative. Hence $\z^\top M \z \ge 0$ for all $\z$, and therefore $M$ is positive semidefinite. This further implies that $\|\nabla^2 g(\x;\cdot)\|_2\geq \frac{L_g}{4n^2+1}.$ Given that $n=\big\lfloor \sqrt{\frac{L_g-\mu}{4\mu}} \big\rfloor\leq \sqrt{\frac{L_g-\mu}{4\mu}}$, we have $\frac{L_g}{4n^2+1}\geq \mu$. This validates that $g(\x;\widetilde \y)$ is $\mu$-strongly convex in $\widetilde \y$.

\item Next, we validate the smoothness of $f$ and $g$ functions:
\begin{itemize}
    \item For the lower-level function $g(\x;\widetilde\y)$, it follows from \cref{obj:construction} that $\|\nabla^2_{\widetilde\y} g(\x;\widetilde\y)\|_2\leq \frac{L_gn^2}{4n^2+1}(\frac{1}{n^2}+4)=L_g$, $\|\nabla^2_{\x,\widetilde \y}g(\x;\widetilde\y)\|_2=L_g, \nabla^2_\x g(\x;\widetilde\y) = 0$ for any $\x,\widetilde \y$, and hence $g(\x;\widetilde\y)$ is $L_g$-smooth.  
    \item For the upper-level function $f(\x;\widetilde\y)$, note that $\nabla^2_{\widetilde\y} f(\x;\widetilde\y)= \frac{L_f}{L} M$, where $M\in\mathbb{R}^{n(T+1)\times n(T+1)}$ is a tri-diagonal matrix, where the absolute value of 
    each nonzero element is bounded by some numerical constant, due to the fact that $C_r$ and $C_l$ are numerical constants, and that 
the functions $\Phi$ and $\Psi$, together with their derivatives, are bounded by numerical constants, as shown in item~3 of \Cref{le:phipsi}. Then, we have  
    $\|M\|_2\leq C_M$ for some numerical constant $C_M>0$. Thus, choosing $L=C_M$ yields  $\|\nabla^2_{\widetilde\y} f(\x;\widetilde\y)\|_2\leq L_f$. Since $f(\x;\widetilde\y)$ depends only on $\widetilde \y$, it is thus $L_f$-smooth.
\end{itemize}

\item Next, we need to show that the gradient norm $\|\nabla_{\widetilde\y} f(\x;\widetilde\y)\|_2$ is bounded by a numerical constant that is independent of both $T$ and $n$. 
This step is particularly challenging. 
For example, the previous lower bound in~\citealt{ji2023lower} circumvents this requirement by exploiting the strong convexity of the hyper-objective to guarantee gradient boundedness during the optimization process. 
However, that strategy applies only to the strongly-convex–strongly-convex setting and may not extend well to nonconvex or stochastic regimes. 
Moreover, another lower bound in~\citealt{kwon2024complexity} sets the upper-level function as a scalar $y$, which ensures that the gradient norm remains bounded by a constant.

For our construction in \cref{obj:construction}, it can be obtained that $\|\nabla_{\widetilde\y} f(\x;\widetilde\y)\|_2 = \frac{\lambda L_f}{L} \vv$, 
where $\vv \in \mathbb{R}^{n(T+1)}$ has at most $2(T+1)$ nonzero entries at coordinates 
$kn+1$ for $k=0,\ldots,T$ and $jn$ for $j=1,\ldots,T+1$. 
Moreover, the absolute value of each nonzero entry is bounded by a positive numerical constant, owing to the fact that $C_r$ and $C_l$ are numerical constants and that $\Psi$, $\Psi'$, $\Phi$, and $\Phi'$ are all bounded (\Cref{le:phipsi}, item 3). Therefore, we have $\|v\|\leq C_0\sqrt{T}$ for some numerical constant $C_0$. Thus, we have $\|\nabla_{\widetilde\y} f(\x;\widetilde\y)\|_2\leq \frac{C_0L_f}{L} \lambda\sqrt{T}$. As will be seen later, $T$ is chosen such that $\lambda\sqrt{T}\leq \sqrt{\frac{\Delta L}{12L_f}}$, which, together with $\frac{\Delta}{L_f}=\oo(1)$, implies that $\|\nabla_{\widetilde\y} f(\x;\widetilde\y)\|_2=\oo(1)$.
% where $\vv\in\mathbb{R}^{n(T+1)}$ with nonzero entries only at coordinates $kn+1,k=0,...,T$ and $jn,j=0,...,T+1$, and the absolute value of each nonzero element is bounded by some positive numerical constant due to the fact that $C_r$ or $C_l$ are numerical constants and the boundedness of $\Psi,\Psi^\prime,\Phi,\Phi^\prime$ (\Cref{le:phipsi}, item 3).

\end{enumerate}

% Based on \Cref{le:phipsi}, it can be seen from \cref{obj:construction} that
% \begin{itemize}
%     \item 
% $\|\nabla^2 f(\x;\widetilde\y)\|_2 = \mathcal{O}(1)$, $\|\nabla^2 g(\x;\widetilde\y)\|_2=\mathcal{O}(1)$. They also imply that $f,g$ are $O(1)$-smooth functions, and hence \Cref{assump:first} (item 1) is satisfied. 
% \item We have 
% $\|\nabla_{\widetilde\y} f(\x;\widetilde\y)\| = \frac{C}{2} + \mathcal{O}(\lambda).$ 
% $\lambda$ will be chosen sufficiently small, on the order of $\epsilon$.  Consequently, item~2 of \Cref{assump:first} is satisfied once $\lambda$ is chosen properly small.

% \item It can be seen that $g(\x;\widetilde\y)$ is $\frac{1}{n^2}$-strongly-convex w.r.t.~$\widetilde\y$. Thus, \Cref{assump:first} (item 3) is satisfied if we choose $n=O(\sqrt{\kappa})$.
% \item $\nabla_{\y,\widetilde\y} g(\x;\widetilde\y)$ and $\nabla_{\x,\widetilde\y} g(\x;\widetilde\y)$ are constant matrices, thus naturally satisfying \Cref{assump:second}.
% \end{itemize}
\vspace{-0.2cm}

\noindent{\bf Zero-chain properties and iterate subspaces.} We initialize $\x$ and $\widetilde\y$ to be $\mathbf{0}$. 
Then, based on the tri-diagonal structure of $A$ and the properties of $\Psi$ function in \Cref{le:phipsi} (item 1), it can be quickly verified from our construction in \cref{obj:construction} that 

\vspace{-0.3cm}

\begin{itemize}
\item At the first iteration, $y_n^{(0)}$ becomes activated, because $x_0 \neq 0$ and $\partial g /\partial y_n^{(0)}=-L_gx_0$. Thus, at the second iteration, $y_1^{(1)}$ becomes activated due to the zero-chain property of the $f(\x;\widetilde\y)$ function.

    \item Suppose the iterates have begun updating $\y^{(i)}$ but have not yet reached $y_n^{(i)}$ (i.e., $y_n^{(i)} = 0$) for some $i \ge 1$. 
This implies that $y_n^{(j)} = 0$ for all $j \ge i$. 
Then, by \Cref{le:phipsi} (item~1), it is verified that for all $j \ge i$,

\vspace{-0.2cm}

{\small
    \begin{align*}
        \frac{\partial f(\x;\widetilde\y)}{\partial y_1^{(j+1)}} &= 
-\frac{C_r\lambda L_f}{L}\Psi\Big(-\frac{C_l}{\lambda}y^{(j)}_n\Big)\Phi^\prime\Big(-\frac{C_r}{\lambda}y^{(j+1)}_1\Big) 
\\&- \frac{C_r\lambda L_f}{L}\Psi\Big(\frac{C_l}{\lambda}y^{(j)}_n\Big)\Phi^\prime\Big(\frac{C_r}{\lambda}y_1^{(j+1)}\Big) = 0,
    \end{align*} }

    \vspace{-0.1cm}

   which, together with the structure of the lower-level function and the condition $x_j = 0$ for all $j \ge i$, implies that $\y^{(j)}=\mathbf{0}$ for all $j\geq i+1$. This property is crucial because it preserves the zero-chain structure along the sequence $\{\y^{(i)}\}_{i=1}^T$ and ensures that advancing from one adjacent $\y$-iterate to the next necessarily requires at least $n$ iterations.
\item   Suppose the iterates have begun updating $\y^{(i)}$ but have not yet reached $y_n^{(i)}$ (i.e., $y_n^{(i)} = 0$) for some $i \ge 1$. Then, for all $j \ge i$, the gradient of $g(\x;\widetilde\y)$ with respect to $x_j$ is given by $-y^{(j)}_n$. 
As a consequence, the coordinate $x_j$ won't be activated until $y_n^{(j)}$ is activated.

% Then, for $j\geq i$, the gradient of $g(\x;\widetilde\y)$ with respect to (w.r.t.) $x_j$ is given by $-y^{(j)}_n$. As a result, $x_j$ will not be activated until $y_n^{(j)}$ is activated. 
\end{itemize}
\vspace{-0.2cm}

Based on the above analysis, it can be derived that at any iteration $Kn+k$ with $K=0,...,T-1$ and $k=1,...,n$, 
\begin{align}\label{subspaces}
    \su(\y^{(i)}) &\subseteq \{1,...,n\}, \quad i\leq K \text{ and } i\neq 0 \nonumber
    \\ \su(\y^{(K+1)}) &\subset \{1,...,k\} \nonumber
    \\ \su(\y^{(i)}) &= \emptyset, \quad i>K+1 \nonumber
    \\ \su(\x) &\subset \{0,...,K\}.
\end{align}
Accordingly, to activate all coordinates of $\x$, one must perform at least $Tn$ iterations in total.

\vspace{0.2cm}
\noindent{\bf The hyper-objective function and its key properties.} First, we can verify that the lower-level solutions are given by
\begin{align*}
    (\y^{(i)})^* = \underbrace{\frac{4n^2+1}{n^2}\left(\frac{1}{n^2}I_n + A\right)^{-1}}_{M}\bb_{x}^{(i)}.
\end{align*}
The hyper-objective $H(x):=f(x;\mathbf{\widetilde y}^*)$ is then given by 
\begin{align*}
    H(\x)=& \sum_{i=1}^T \frac{\lambda^2L_f}{L}\Big[
\Psi\big(-\frac{C_l}{\lambda}M_{n,n}x_{i-1}\big)\Phi\big(-\frac{C_r}{\lambda}M_{1,n}x_i\big) 
\\&- \Psi\big(\frac{C_l}{\lambda}M_{n,n}x_{i-1}\big)\Phi\big(\frac{C_r}{\lambda}M_{1,n}x_i\big)
\Big]. 
% \\&+ \underbrace{M_{1,n}x_0}_{\text{numerical constant}} \nonumber 
\end{align*}
Note that the above definition of $H(\x)$ involves the quantities $M_{n,n}$ and $M_{1,n}$, whose behaviors are characterized in the following lemma.
\begin{lemma}\label{le:M}
Let $A\in\mathbb{R}^{n\times n}$ be the tri-diagonal matrix defined by \Cref{Amatrix} and define $S := \bigl(A + \tfrac{1}{n^{2}} I_n\bigr)^{-1}$.
% \[
% A=\begin{bmatrix}
% 1 & -1 \\
% -1 & 2 & -1 \\
% & \ddots & \ddots & \ddots \\
% & & -1 & 2 & -1\\
% & & & -1 & 1
% \end{bmatrix},
% \qquad
% S := \bigl(A + \tfrac{1}{n^{2}} I_n\bigr)^{-1}.
% \]
Then for every integer $n\ge1$,
\begin{align}\label{eq:cC}
c\,n \le S_{1,n},S_{n,n} \le C\,n,\nonumber
\\ 
c := 1-\frac{\pi^2}{12},\ 
C := 1+\frac{\pi^2}{12}.
\end{align}
\end{lemma}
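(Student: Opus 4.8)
The plan is to diagonalize $A$ explicitly. The matrix $A$ is the standard discrete Laplacian with Neumann boundary conditions on $n$ nodes, whose spectral decomposition is classical: it has eigenvalues $\mu_k = 4\sin^2\!\big(\tfrac{k\pi}{2n}\big)$, $k=0,1,\dots,n-1$, with mutually orthogonal eigenvectors $\mathbf{w}_0,\dots,\mathbf{w}_{n-1}\in\mathbb{R}^n$ whose $j$-th components are $(\mathbf{w}_k)_j=\cos\!\big(\tfrac{k\pi(2j-1)}{2n}\big)$, and with $\|\mathbf{w}_0\|_2^2 = n$ and $\|\mathbf{w}_k\|_2^2 = n/2$ for $1\le k\le n-1$. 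One verifies $A\mathbf{w}_k=\mu_k\mathbf{w}_k$ directly from product-to-sum identities; the half-integer shift $2j-1$ is precisely what makes the two boundary rows (diagonal entry $1$ rather than $2$) consistent with the interior rows, and the orthogonality and norm relations are the familiar discrete-cosine identities. Since $A\succeq 0$, the matrix $A+\tfrac1{n^2}I_n$ is positive definite, so $S$ is well defined and shares these eigenvectors with eigenvalues $(\mu_k+\tfrac1{n^2})^{-1}$.

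First I would write the entrywise spectral expansion
\[
S_{ij}=\sum_{k=0}^{n-1}\frac{1}{\mu_k+1/n^2}\,\frac{(\mathbf{w}_k)_i(\mathbf{w}_k)_j}{\|\mathbf{w}_k\|_2^2}
= n+\frac{2}{n}\sum_{k=1}^{n-1}\frac{(\mathbf{w}_k)_i(\mathbf{w}_k)_j}{\mu_k+1/n^2},
\]
where the leading term $n$ is exactly the $k=0$ contribution $\tfrac{1}{1/n^2}\cdot\tfrac{1\cdot 1}{n}=n$; here the $\tfrac1{n^2}$-regularization is essential, since it converts the zero eigenvalue of $A$ into $\tfrac1{n^2}$. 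Using $(\mathbf{w}_k)_1=\cos\!\big(\tfrac{k\pi}{2n}\big)$ and $(\mathbf{w}_k)_n=(-1)^k\cos\!\big(\tfrac{k\pi}{2n}\big)$ gives
\[
S_{n,n}=n+\frac{2}{n}\sum_{k=1}^{n-1}\frac{\cos^2\!\big(\tfrac{k\pi}{2n}\big)}{\mu_k+1/n^2},\qquad
S_{1,n}=n+\frac{2}{n}\sum_{k=1}^{n-1}\frac{(-1)^k\cos^2\!\big(\tfrac{k\pi}{2n}\big)}{\mu_k+1/n^2}.
\]
It then remains to bound the correction sums. Each summand satisfies $0\le \frac{\cos^2(k\pi/(2n))}{\mu_k+1/n^2}\le \frac{1}{4\sin^2(k\pi/(2n))}$, and Jordan's inequality $\sin x\ge \tfrac2\pi x$ on $[0,\tfrac\pi2]$ gives $\sin\!\big(\tfrac{k\pi}{2n}\big)\ge \tfrac{k}{n}$ for $1\le k\le n-1$; hence the magnitude of either correction sum is at most $\tfrac2n\sum_{k\ge1}\tfrac{n^2}{4k^2}=\tfrac n2\cdot\tfrac{\pi^2}{6}=\tfrac{\pi^2}{12}n$ (for $S_{1,n}$ one first applies the triangle inequality to the alternating sum). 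Therefore $n\le S_{n,n}< (1+\tfrac{\pi^2}{12})n=Cn$ and $(1-\tfrac{\pi^2}{12})n< S_{1,n}<(1+\tfrac{\pi^2}{12})n$; since $c=1-\tfrac{\pi^2}{12}<1\le S_{n,n}/n$, both corner entries lie in $[cn,Cn]$. (The case $n=1$ is immediate: the sum is empty and $S=(0+1)^{-1}=1\in[c,C]$.)

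The only real work is establishing the eigendecomposition of $A$ with the correct corner values and normalization; everything afterward is a short estimate. The mild subtleties are (i) recognizing that the regularized zero-mode contributes the $\Theta(n)$ leading term $n$ exactly, and (ii) handling the sign-alternating correction for $S_{1,n}$, which we dispose of with the triangle inequality. An alternative route avoids eigenvalues entirely: solving the tridiagonal system $(A+\tfrac1{n^2}I_n)\mathbf{s}=\e_n$ via its three-term recurrence yields the closed forms $S_{1,n}=\tfrac{n\cosh(\theta/2)}{\sinh(n\theta)}$ and $S_{n,n}=\tfrac{n\cosh((n-\frac12)\theta)}{\sinh(n\theta)}$ with $\sinh(\theta/2)=\tfrac1{2n}$, whence $n\theta\in[2/\sqrt5,\,1]$ gives bounds of the same form (with slightly different explicit constants than $c,C$).
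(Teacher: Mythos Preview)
Your proposal is correct and follows essentially the same approach as the paper: diagonalize $A$ via the discrete cosine (Neumann Laplacian) eigenbasis, observe that the $\tfrac1{n^2}$-regularized zero mode contributes the leading term $n$ exactly, and bound the remaining sum by $\tfrac{\pi^2}{12}n$ using the eigenvalue lower bound $\mu_k\ge 4k^2/n^2$ from Jordan's inequality. The only cosmetic differences are the indexing convention and that you keep the explicit $\cos^2$ factor in the numerator, whereas the paper uses the slightly coarser bound $|q_k(\cdot)|\le\sqrt{2/n}$ to reach the same $\tfrac{\pi^2}{12}n$ estimate.
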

\vspace{-0.3cm}

Based on \Cref{le:M} and $4\leq \frac{4n^2+1}{n^2}\leq 5$, it can be derived that $4c n \leq M_{1,n},M_{n,n}\leq 5 Cn$, where $c$ and $C$ are given by \cref{eq:cC}. Thus, choose numerical constants $C_l$ and $C_r$ such that 
\begin{align}\label{eq:tildeC}
\frac{C_l M_{n,n}}{n} \;=\; \frac{C_r M_{1,n}}{n} \;=\; \widetilde C,
\end{align}
where $\widetilde C$ is a numerical constant. Then, we use the following lemma to provide a lower bound on the gradient norm when the algorithm has not yet reached the end of the chain.

% Choose numerical constants $C_l$ and $C_r$ such that $\frac{C_lM_{n,n}}{n} = \frac{C_rM_{1,n}}{n} =\widetilde C$, which, based on \Cref{le:M}, implies that $\widetilde C = O(1)$ is a numerical constant. The following lemma gives a lower
% bound of the gradient norm when the algorithm hasn’t arrived at the end of the chain. 
\begin{lemma}\label{le:grad}
    If $|x_i|<\frac{\lambda}{\widetilde C n}$ for some $i\leq T$. Then, we have $\|\nabla H(\x)\|_2\geq \frac{\lambda L_f \widetilde C n}{L}$.
\end{lemma}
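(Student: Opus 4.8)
The plan is to exploit the zero-chain structure of $H$: the key point is that if $|x_i|$ is small for some $i\le T$, then the chain of activated coordinates has not yet ``passed through'' coordinate $i$, so one of the factors $\Psi\big(\pm\frac{C_l}{\lambda}M_{n,n}x_{i-1}\big)$ is large while $\Phi'\big(\pm\frac{C_r}{\lambda}M_{1,n}x_i\big)$ is bounded below, producing a lower bound on the partial derivative $\partial H/\partial x_i$, and hence on $\|\nabla H(\x)\|_2$.

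First I would compute $\partial H/\partial x_i$ explicitly from the formula for $H(\x)$. Since $x_i$ appears in the $i$-th summand through the $\Phi$-arguments and in the $(i+1)$-th summand through the $\Psi$-arguments, the derivative has (at most) four terms. I would then isolate the contribution coming from the $i$-th summand, namely
\[
\frac{C_r\lambda L_f}{L}\Big[\Psi\big(\tfrac{\widetilde C}{\lambda}x_{i-1}\big)\Phi'\big(\tfrac{\widetilde C}{\lambda}x_i\big)+\Psi\big(-\tfrac{\widetilde C}{\lambda}x_{i-1}\big)\Phi'\big(-\tfrac{\widetilde C}{\lambda}x_i\big)\Big],
\]
using the normalization \cref{eq:tildeC}. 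The crucial observation is that the zero-chain analysis (the three bullet points preceding \cref{subspaces}) guarantees that coordinates are activated in order, so if $|x_i|<\frac{\lambda}{\widetilde C n}$ then $x_{i-1}$ must already be ``fully activated'' in the sense that $|x_{i-1}|\ge\frac{\lambda}{\widetilde C n}$ — more precisely one takes the \emph{first} index $i$ with $|x_i|$ small, so that $x_{i-1}$ is large enough to make the $\Psi$-argument at least $1$ in absolute value. Then by item~2 of \Cref{le:phipsi} (with $\Psi$ evaluated at a point $\ge 1$ and $\Phi'$ evaluated at a point of absolute value $<1$, which holds because $\frac{\widetilde C}{\lambda}|x_i|<\frac1n\le 1$), the relevant product $\Psi(\cdot)\Phi'(\cdot)>1$. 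Since both terms in the bracket are nonnegative (item~4 of \Cref{le:phipsi}), the bracket is at least $1$, giving
\[
\Big|\frac{\partial H}{\partial x_i}\Big|\ge \frac{C_r\lambda L_f}{L}\cdot\frac{M_{1,n}}{n}\cdot n \;=\;\frac{\lambda L_f\widetilde C n}{L},
\]
after accounting for the chain rule factor $\frac{C_r M_{1,n}}{\lambda}=\frac{\widetilde C n}{\lambda}$ and recalling $\widetilde C n \ge \tfrac{C_r M_{1,n}}{1}$; I would double-check that the other two terms of $\partial H/\partial x_i$ (those from the $(i+1)$-th summand) do not cancel this contribution, which follows because they have a definite sign or because one can simply bound $\|\nabla H\|_2\ge |\partial H/\partial x_i|$ using only the part of the gradient one controls — in fact the cleanest route is to note that the four terms split by sign and the two ``positive-branch'' terms and two ``negative-branch'' terms each cannot cancel within themselves. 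Finally $\|\nabla H(\x)\|_2\ge|\partial H/\partial x_i|\ge\frac{\lambda L_f\widetilde C n}{L}$.

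The main obstacle I anticipate is the sign/cancellation bookkeeping: $\partial H/\partial x_i$ genuinely has contributions from two summands, and one must argue that they reinforce (or at least do not cancel) rather than blindly adding absolute values. The right way to handle this is to observe, as in the zero-chain discussion, that when $|x_i|$ is the first small coordinate we actually have $x_j=0$ for all $j\ge i$ (the coordinates past $i$ are not yet activated), so in fact $x_{i+1}=0$, which kills the $\Phi$ (not $\Phi'$) terms and makes the $(i+1)$-th summand's contribution to $\partial H/\partial x_i$ vanish entirely — or more carefully, the $\Psi(\pm\frac{\widetilde C}{\lambda}x_i)$ prefactors in that summand are small since $|x_i|$ is small and $\Psi$ vanishes to all orders at $0$ for nonpositive arguments and is small for arguments in $(0,\tfrac12)$. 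Either way the $i$-th summand dominates, and the lemma follows; I would present whichever of these two reductions is cleanest given the precise indexing convention fixed earlier in \cref{subspaces}.
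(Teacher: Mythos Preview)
Your approach matches the paper's: locate an index $j\le i$ with $|x_{j-1}|\ge\frac{\lambda}{\widetilde C n}$ and $|x_j|<\frac{\lambda}{\widetilde C n}$, then lower-bound $|\partial H/\partial x_j|$ via item~2 of \Cref{le:phipsi}. Two corrections, though. First, the existence of such a $j$ has nothing to do with the zero-chain dynamics; it follows purely from the fact that $x_0\equiv\frac{\lambda}{C_l M_{n,n}}=\frac{\lambda}{\widetilde C n}$ is a \emph{fixed constant} in the construction, so the sequence $|x_0|,|x_1|,\dots,|x_i|$ starts at the threshold and ends below it. Second, your worry about cancellation from the $(j{+}1)$-th summand is a red herring, and your proposed fix---invoking the zero-chain analysis to force $x_{j+1}=0$---is not legitimate here, since the lemma is stated for an arbitrary $\x\in\mathbb{R}^T$, not for an algorithm iterate. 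The paper resolves this in one line using item~4 of \Cref{le:phipsi}: because $\Psi,\Psi',\Phi,\Phi'\ge 0$, all four terms of $\partial H/\partial x_j$ carry the \emph{same} (negative) sign, so the contribution from the $(j{+}1)$-th summand can only increase $|\partial H/\partial x_j|$, never cancel it. You actually list ``definite sign'' as one option; that is the correct and only needed observation. (Minor bookkeeping: the arguments of $\Psi,\Phi$ are $\frac{\widetilde C n}{\lambda}x$, not $\frac{\widetilde C}{\lambda}x$, since $C_l M_{n,n}=C_r M_{1,n}=\widetilde C n$; this is exactly what makes $|\frac{\widetilde C n}{\lambda}x_j|<1$ hold under the hypothesis.)
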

The following lemma provides the bound on the optimality gap of the hyper-objective function $H(\x)$:
\begin{lemma}\label{le:function}
   The hyper-objective function $H(\x)$ satisfies $ H(\mathbf{0}) - \inf_{\x}H(\x) \leq \frac{12\lambda^2L_fT}{L}$. 
\end{lemma}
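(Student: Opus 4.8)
The plan is to bound $H(\mathbf{0})$ and $\inf_\x H(\x)$ separately, using only the nonnegativity and uniform boundedness of $\Psi,\Phi$ recorded in item~4 of \Cref{le:phipsi} (together with $\Psi(0)=\Psi(-1)=0$ from item~1). No property of the matrix $M$ beyond the scalar values $M_{n,n},M_{1,n}$ that enter the arguments of $\Psi,\Phi$ is needed, since those functions are bounded regardless of their arguments.

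First I would evaluate $H(\mathbf{0})$. At $\x=\mathbf{0}$ we have $x_i=0$ for all $i\ge 1$ while $x_0=\frac{\lambda}{C_lM_{n,n}}$, so the argument $\frac{C_l}{\lambda}M_{n,n}x_0$ equals $1$ and every other argument vanishes. Since $\Psi(0)=0$ and $\Psi(-1)=0$ (item~1 of \Cref{le:phipsi}, as $-1\le\tfrac12$), each summand with $i\ge 2$ equals $0$, and the $i=1$ summand equals $-\frac{\lambda^2L_f}{L}\Psi(1)\Phi(0)$, which is $\le 0$. Hence $H(\mathbf{0})\le 0$; only the sign, not the exact value, is needed.

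Next I would lower-bound $H(\x)$ for an arbitrary $\x$. Writing $a_i:=\frac{C_l}{\lambda}M_{n,n}x_i$ and $b_i:=\frac{C_r}{\lambda}M_{1,n}x_i$ (so that $a_0=1$ by the definition of $x_0$), each summand of $H$ equals $\frac{\lambda^2L_f}{L}\big[\Psi(-a_{i-1})\Phi(-b_i)-\Psi(a_{i-1})\Phi(b_i)\big]$. By item~4 of \Cref{le:phipsi}, $\Psi$ and $\Phi$ are nonnegative, so $\Psi(-a_{i-1})\Phi(-b_i)\ge 0$, while $\Psi(a_{i-1})\Phi(b_i)<e\cdot\sqrt{2\pi e}=\sqrt{2\pi}\,e^{3/2}<12$. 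Therefore each summand is strictly greater than $-\frac{12\lambda^2L_f}{L}$, and summing the $T$ terms gives $H(\x)>-\frac{12\lambda^2L_fT}{L}$ for every $\x$, whence $\inf_\x H(\x)\ge-\frac{12\lambda^2L_fT}{L}$. Combining this with $H(\mathbf{0})\le 0$ yields $H(\mathbf{0})-\inf_\x H(\x)\le \frac{12\lambda^2L_fT}{L}$, as claimed.

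This is essentially the optimality-gap estimate of \cite{carmon2020lower} transplanted into our rescaled construction, so there is no serious obstacle; the only points requiring care are the bookkeeping of the fixed offset $x_0$ inside the first summand and the numerical check $\sqrt{2\pi}\,e^{3/2}<12$, which is what fixes the constant $12$ in the statement.
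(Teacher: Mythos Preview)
Your proposal is correct and follows essentially the same approach as the paper: both arguments show $H(\mathbf{0})\le 0$ by evaluating the single surviving summand at $i=1$, and both lower-bound $H(\x)$ by discarding the nonnegative $\Psi(-\cdot)\Phi(-\cdot)$ term and using the uniform bound $\Psi\Phi<e\sqrt{2\pi e}<12$ from item~4 of \Cref{le:phipsi}. Your write-up is in fact slightly more explicit than the paper's about why the constant $12$ suffices.
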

\vspace{-0.3cm}

Based on all the above auxiliary lemmas, we begin to prove our main theorem. 
\vspace{-0.1cm}

\begin{proof}[{\bf Proof of \Cref{th:deter}}]
    First note that if $x_T=0$, based on \Cref{le:grad}, we have that $ \|\nabla H(\x)\|_2 \geq \frac{\lambda L_f \widetilde C n}{L}.$ 
Choosing $\lambda=\frac{\epsilon L} {L_f\widetilde C n }$ guarantees $\|\nabla H(\x)\|_2 \geq \epsilon$. Then, we need to verify that $H(\mathbf{0}) - \inf_{\x}H(\x) \leq \Delta$. Based on \Cref{le:function}, we have that 
$H(\mathbf{0}) - \inf_{\x}H(\x)  \leq    \frac{12\lambda^2L_fT}{L}$,
which, by setting $T=\left\lfloor  \frac{\Delta L}{12\lambda^2L_f}  \right \rfloor$, guarantees that $H(\mathbf{0}) - \inf_{\x}H(\x) \leq \Delta$. 

Based on the subspace analysis in \cref{subspaces}, we have that $x_T=0$ if $t< Tn$, and hence $\|\nabla H(\x^t)\|_2\geq \epsilon$. Recall that $n=\big\lfloor \sqrt{\frac{L_g-\mu}{4\mu}} \big\rfloor$. Thus, to achieve an $\epsilon$-accurate stationary solution, there are at least 
\begin{align*}
    Tn = \frac{c_0\Delta n^3}{\epsilon^2} = \frac{\Delta L n }{ 12 L_f} \frac{L_f^2\widetilde C^2 n^2}{\epsilon^2 L^2} = \frac{C_0\Delta L_f \kappa^{\frac{3}{2}}}{\epsilon^2}
    \end{align*}
oracle calls, where $c_0$ is some numerical constant. Then, the proof is complete. 
\end{proof}

\vspace{-0.2cm}
\section{Lower Bounds in Stochastic Setting}
In this section, we provide a lower bound for stochastic  first-order oracles. We first introduce several important definitions and lemmas from \citealt{arjevani2023lower}, serving as the foundation for our constructions in the stochastic setting. 
\subsection{Auxiliary Definitions and Lemmas}
Following~\citealt{arjevani2023lower}, to establish a lower bound in the stochastic setting, 
we adopt the notion of a probability-$p$ zero-chain.
\begin{definition}[Probability-$p$ zero-chain]
A function $f : \mathcal{X} \to \mathbb{R}$ with a stochastic first-order oracle 
$O : \x \mapsto (f(\x), G_f(\x;\xi))$ is a probability-$p$ zero-chain if
\begin{align*}
&\operatorname{supp}(\x) \subset \{1,\ldots,i-1\} 
\\
&\Longrightarrow\quad
\begin{cases}
\mathbb{P}\!\big( \operatorname{supp}(G_f(\x;\xi)) \not\subset \{1,\ldots,i-1\} \big) \le p, \\
\mathbb{P}\!\big( \operatorname{supp}(G_f(\x;\xi)) \subset \{1,\ldots,i\} \big) = 1.
\end{cases}
\end{align*}
\end{definition}
The above definition implies that at each iteration, a new coordinate $i$ becomes activated 
(i.e., the iterate acquires a nonzero entry at coordinate $i$) with probability $p$. The following lemma (which is an adapted version from \citealt{li2021complexity}) provides a recipe for constructing a probability-$p$ zero-chain based on a given zero-chain.
\begin{lemma}[Lemma~3 in \citealt{arjevani2023lower}]\label{le:pchain}
Let $f : \mathcal{X} \to \mathbb{R}$ be a zero-chain on $\mathcal{X} \subset \mathbb{R}^T$. 
For $\x \in \mathcal{X}$, let 
$i^*(\x) := \inf\{\, i \in [T] : x_i = 0 \,\}$ 
be the next coordinate to activate.  
For $p \in (0,1]$, define the stochastic gradient estimator $G_f(\x;\xi)$ coordinate-wisely by
\[
[G_f(\x,\xi)]_i :=
\begin{cases}
\dfrac{\xi}{p}\, \nabla_i f(\x), & \text{if } i = i^*(\x),\\[6pt]
\nabla_i f(\x), & \text{otherwise},
\end{cases}
\]
where $\xi \sim \mathrm{Bernoulli}(p)$.  
Suppose there exists $G<\infty$ such that $\|\nabla f(\x)\|_\infty \le G$ for all $\x \in \mathcal{X}$.  
Then, the oracle 
$O: \x \mapsto (f(\x), G_f(\x,\xi))$
is a stochastic first-order oracle with bounded variance $\sigma^2 \le G^2 (1-p)/p$.  
Moreover, $f$ with oracle $O$ is a probability-$p$ zero-chain.
\end{lemma}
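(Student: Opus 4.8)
\textbf{Proof plan for Lemma~\ref{le:pchain}.}

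The plan is to verify the three asserted properties of the constructed oracle in turn: unbiasedness, the variance bound, and the probability-$p$ zero-chain property. Each follows from a short coordinatewise computation using only the definition of $G_f$ and the fact that $\xi\sim\mathrm{Bernoulli}(p)$.

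First I would establish unbiasedness. For any coordinate $i\neq i^*(\x)$ we have $[G_f(\x;\xi)]_i=\nabla_i f(\x)$ deterministically, so $\mathbb{E}_\xi[G_f(\x;\xi)]_i=\nabla_i f(\x)$. For $i=i^*(\x)$, since $\mathbb{E}_\xi[\xi]=p$, we get $\mathbb{E}_\xi\big[\tfrac{\xi}{p}\nabla_i f(\x)\big]=\tfrac{p}{p}\nabla_i f(\x)=\nabla_i f(\x)$. Hence $\mathbb{E}_\xi[G_f(\x;\xi)]=\nabla f(\x)$ coordinatewise.

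Next I would bound the variance. Only the $i^*(\x)$-th coordinate is random, so
\[
\mathbb{E}_\xi\|G_f(\x;\xi)-\nabla f(\x)\|_2^2
=\mathbb{E}_\xi\Big(\big(\tfrac{\xi}{p}-1\big)\nabla_{i^*} f(\x)\Big)^2
=\big(\nabla_{i^*} f(\x)\big)^2\cdot\mathbb{E}_\xi\big(\tfrac{\xi}{p}-1\big)^2.
\]
Since $\xi\in\{0,1\}$ with $\mathbb{P}(\xi=1)=p$, one computes $\mathbb{E}_\xi(\tfrac{\xi}{p}-1)^2=\Var(\xi)/p^2=p(1-p)/p^2=(1-p)/p$. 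Combining this with the uniform bound $|\nabla_{i^*} f(\x)|\le\|\nabla f(\x)\|_\infty\le G$ gives $\mathbb{E}_\xi\|G_f(\x;\xi)-\nabla f(\x)\|_2^2\le G^2(1-p)/p$, as claimed.

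Finally I would verify the probability-$p$ zero-chain property. Suppose $\operatorname{supp}(\x)\subset\{1,\ldots,i-1\}$, i.e.\ $x_i=0$, so $i^*(\x)\le i$. Since $f$ is a zero-chain, $\operatorname{supp}(\nabla f(\x))\subset\{1,\ldots,i\}$, so all coordinates of $G_f(\x;\xi)$ beyond $i$ vanish deterministically, giving $\mathbb{P}(\operatorname{supp}(G_f(\x;\xi))\subset\{1,\ldots,i\})=1$. For the first condition, note that $G_f(\x;\xi)$ agrees with $\nabla f(\x)$ on all coordinates except $i^*(\x)$, and $\operatorname{supp}(\nabla f(\x))\subset\{1,\ldots,i-1\}$ whenever $i^*(\x)\le i-1$ (again by the zero-chain property applied with $i-1$, since $x_{i^*(\x)}=\cdots=x_{i-1}=0$ need not hold—more carefully, I use that $\nabla_j f(\x)=0$ for $j\ge i^*(\x)+1$ is not what we want; instead, observe that the only coordinate on which $G_f$ can be nonzero while $\nabla f$ is zero in $\{1,\ldots,i-1\}$ is $i^*(\x)$ itself, and $i^*(\x)\not\subset\{1,\ldots,i-1\}$ precisely when $i^*(\x)=i$). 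Thus $\operatorname{supp}(G_f(\x;\xi))\not\subset\{1,\ldots,i-1\}$ can occur only through the event $[G_f(\x;\xi)]_{i^*(\x)}\neq 0$ with $i^*(\x)=i$, which requires $\xi\neq 0$, an event of probability $p$. Hence $\mathbb{P}(\operatorname{supp}(G_f(\x;\xi))\not\subset\{1,\ldots,i-1\})\le p$.

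The only subtle point, and the one I would write most carefully, is the last paragraph: one must argue that the extra nonzero coordinate introduced by the $1/p$ rescaling is exactly $i^*(\x)$, and that this lies outside $\{1,\ldots,i-1\}$ only in the boundary case $i^*(\x)=i$, so that the activation of a genuinely new coordinate is gated by the Bernoulli variable $\xi$. Everything else is a routine moment computation for a scaled Bernoulli random variable.
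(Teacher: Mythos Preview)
The paper does not supply a proof of this lemma; it is quoted (with slight adaptation) from \cite{arjevani2023lower}, so there is no in-paper argument to compare against. Your unbiasedness and variance computations are exactly the standard ones and are correct.

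For the probability-$p$ zero-chain property, the subtlety you flagged mid-paragraph is real but you did not fully resolve it. When $\operatorname{supp}(\x)\subset\{1,\dots,i-1\}$ has gaps (e.g.\ $\operatorname{supp}(\x)=\{1,3\}$, $i=4$, $i^*(\x)=2$), the deterministic zero-chain hypothesis does \emph{not} force $\nabla_i f(\x)=0$, so $[G_f(\x;\xi)]_i=\nabla_i f(\x)$ can be nonzero with probability~$1$, violating the claimed bound. Your final sentence asserts the problematic case is only $i^*(\x)=i$, but that is precisely what fails for gappy supports. The clean fix is to observe that when $\operatorname{supp}(\x)=\{1,\dots,k\}$ is a \emph{prefix}, one has $i^*(\x)=k+1$; then for $i=k+1$ your argument applies verbatim, and for $i>k+1$ the zero-chain property gives $\operatorname{supp}(\nabla f(\x))\subset\{1,\dots,k+1\}\subset\{1,\dots,i-1\}$, so the event has probability~$0\le p$. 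Since zero-respecting iterates initialized at $\mathbf{0}$ always have prefix support, this is all that is needed for the downstream application (\Cref{le:p_chain}), which is why the literature is not fussy about this point.
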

\Cref{le:pchain} allows us to build a probability-$p$ zero-chain based on the zero-chain we establish in \cref{obj:construction} and \Cref{fig:zero_chain}. However, as also noted by \citealt{li2021complexity} for min-max  problems, one main challenge lies in the unboundedness of the iterates $\x$ and $\widetilde \y$, such that the gradient norm of the lower-level function $\|\nabla g(\x;\widetilde\y)\|_\infty$ is unbounded. To this end, \citealt{li2021complexity} modify the quadratic components in their deterministic worst-case instance and introduce two bounded hypercubes as the domains for $\x,\y$:
$\C_{R_x}^m := \{\x\in \mathbb{R}^m : \|\x\|_\infty \le R_x \} $ and $
\C_{R_y}^n := \{\y\in \mathbb{R}^n : \|\y\|_\infty \le R_y \}$,
where $R_x$ and $R_y$ are chosen so that the variance of the stochastic oracle is bounded by $G$. 
Interestingly, unlike \citealt{li2021complexity}, which must revise the quadratic components in their deterministic construction,  our deterministic instance in \cref{obj:construction} can be used directly, provided that the domain radius $R_x$ and $R_y$ are properly selected, as seen in our analysis later. 

\subsection{Main Result: A Lower Bound on Stochastic First-Order Oracle Complexity}
The following theorem establishes a complexity lower bound for stochastic first-order bilevel algorithms.
\begin{theorem}\label{th:sclower}
    For any $L_f, L_g, \mu, \Delta, \epsilon > 0$ satisfying $\kappa = L_g / \mu \ge 1$ and $\frac{\Delta}{L_f}=\oo(1)$, there exist functions 
$f : \X \times \Y \to \mathbb{R}$ and 
$g : \X \times \Y \to \mathbb{R}$  
such that $\{f, g\} \in \mathcal{F}(L_f, L_g, \mu, \Delta)$ for some $\X\subset \mathbb{R}^m$ and $\Y\subset \mathbb{R}^n$, and stochastic first-order oracles $O$ for both $f$ and $g$ such that for any first-order bilevel algorithm of the form in \Cref{de:alc}, in order to find an $\epsilon$-accurate stationary point $\x$ such that  
$ \mathbb{E} \left [L_h\|\pp_\X[\x-(1/L_h)\nabla H(\x)] - \x\|_2  \right] < \epsilon,$
the algorithm must use at least 
% \begin{align*}
%    \Omega\left(\frac{ L_f^4\Delta \kappa^{5/2}\sigma^2 }{L_f\max\big\{L^2_f,L^2_g \big\}\epsilon^4} \right)
% \end{align*}
\begin{align}\label{eq:lowerstoc}
   \Omega\left(\frac{ L_f^3\Delta \kappa^{5/2}\sigma^2 }{L^2_g \epsilon^4} \right)
\end{align}
stochastic oracle calls, where $L_h$ is the smoothness parameter of the hyper-objective $H(\x)$.
\end{theorem}
% In the stochastic setting, \cite{arjevani2023lower} establishes a lower bound of $\Omega(1/\epsilon^{4})$ for smooth nonconvex optimization, and \cite{li2021complexity} proves a lower bound of $\Omega(\kappa^{1/3}/\epsilon^{4})$ for smooth nonconvex-strongly-concave min-max optimization. 
% For smooth nonconvex-strongly-convex bilevel optimization, we obtain in \Cref{th:sclower} a much larger lower bound of $\Omega(\kappa^{5/2}/\epsilon^{4})$. As far as we know, this is the first lower bound result in the stochastic setting, showing a larger bound than that in smoothn nonconvex optimization and smooth nonconvex-strongly-concave min-max optimization.
The proof outline is provided in \Cref{sec:proofstoc}. 
In the stochastic setting, \citealt{arjevani2023lower} establish a lower bound of $\Omega(1/\epsilon^{4})$ for smooth nonconvex optimization, and \citealt{li2021complexity} prove a lower bound of $\Omega(\kappa^{1/3}/\epsilon^{4})$ for smooth nonconvex-strongly-concave min-max optimization. 
For smooth nonconvex-strongly-convex bilevel optimization, we obtain in \Cref{th:sclower} a significantly larger lower bound of $\Omega(\kappa^{5/2}/\epsilon^{4})$. 
To the best of our knowledge, this is the first lower-bound result for stochastic bilevel optimization showing that the nonconvex-strongly-convex bilevel optimization is strictly more challenging than both smooth nonconvex optimization and smooth nonconvex-strongly-concave min-max optimization in the stochastic setting. 

In what follows, we discuss connections and comparisons with existing lower bounds.

\vspace{-0.3cm}
\begin{enumerate}[(1)]
    \item  \citet{kwon2024complexity} establish a lower bound of 
$\Omega(\epsilon^{-6})$ for bilevel optimization under a so-called 
$\y^*$-aware stochastic first order oracle with bounded variance. 
Their hard instance is constructed as
\[
f(\x;y) = y, \; 
g(\x;y) = (y - F(\x))^2, \,\x \in \mathbb{R}^{\epsilon^{-2}}, y \in \mathbb{R}
\]
where function 
{\small$F(\x)
= \epsilon^2 \sum_{i=1}^{\epsilon^{-2}}
\big[\, \Psi(-x_{i-1}) \Phi(-x_i) - \Psi(x_{i-1}) \Phi(x_i) \,\big].$}
It can be verified that $|F(\x)| = \oo(1)$, and therefore
\begin{align*}
\|\nabla^2_{\x,y} g(\x;y)\|_2
% = \oo(\|F(\x)\|_2)
= \oo\!\left(\epsilon^2 \sqrt{\epsilon^{-2}}\right)
= \oo(\epsilon).
\end{align*}
In addition, their $\y^*$-aware oracle requires $\|y - y^*\| = \oo(\epsilon)$, such that
$|g(\x;y)|$ is of order $\oo(\epsilon)$. 
These conditions can be approximately  satisfied in our construction by choosing 
$L_g = \oo(\epsilon)$, since 
$\|\nabla^2_{\x,\widetilde\y} g(\x;\widetilde\y)\|_2 = L_g$ and both $\x$ and $\widetilde{\y}$ are bounded. 
By this choice, our \Cref{th:sclower} also yields a lower bound of order $\Omega(\epsilon^{-6})$. 

In contrast, a more standard and practically relevant setting assumes 
$L_g, L_f = \Theta(1)$, independent of $\epsilon$ or the condition number $\kappa$. 
Under this commonly studied regime, obtaining an 
$\Omega(\epsilon^{-6})$ lower bound for bilevel optimization remains an open problem.

% \vspace{-0.1cm}

% \item Compared to the concurrent result of \citet{chen2025condition}, their construction (see Equation~(14) in \citealt{chen2025condition}) indicates  that  the smoothness parameter of $g(\x;\cdot)$ scales with $\mu_y$. If this dependence were incorporated into our analysis, namely by taking $L_g = \mu$, the lower bound in \Cref{eq:lowerstoc} would scale as $\Omega(\kappa^{4.5}\epsilon^{-4})$, which is strictly stronger than the $\Omega(\kappa^{4}\epsilon^{-4})$ lower bound reported in \citet{chen2025condition}.
\end{enumerate}

\vspace{-0.2cm}

\section{Conclusion}
In this work, we developed new hard instances that establish improved lower bounds for smooth nonconvex and strongly convex bilevel optimization under both deterministic and stochastic first order oracle models. Our results demonstrate that bilevel optimization is fundamentally more challenging than classical single-level and min-max formulations, and they reveal significant separations between the best known upper and lower bounds. These findings highlight that the current theoretical understanding of bilevel optimization is still far from complete.

% \newpage
\section*{Impact Statement}
his paper presents work whose goal is to advance the field of bilevel optimization theory. There are many potential societal consequences of our work, none
which we feel must be specifically highlighted here. 

\section*{Acknowledgments}
 K. Ji was partially supported by NSF grants CCF-2311274 and ECCS-2326592.
% There are several promising directions for future research. First, even for the simplified and practically meaningful setting in which the lower level function is quadratic, the optimal complexity remains open. We suggest that closing these gaps may require first studying this simpler yet meaningful quadratic setting. Moreover, 
% our constructions suggest that sharper lower and upper bounds may be obtained by designing algorithms that exploit higher-order structure of the  lower-level function. Second, closing the large gaps between the existing upper bounds and our lower bounds, especially the gap of order $\kappa^{2}$ in the deterministic case and the dependence on $\epsilon$ in the stochastic case, represents an important challenge. Third, another compelling direction is to investigate whether an $\Omega(\epsilon^{-6})$ lower bound can be achieved under the standard regime where the smoothness constants $L_f$ and $L_g$ are $\Theta(1)$, a question that remains unresolved. Finally, extending the lower bound framework to broader variants of bilevel optimization, including settings with constraints, approximate inner solvers, or distributed architectures, may deepen the understanding of the fundamental limits of bilevel learning.

% Overall, we hope that the insights developed in this paper serve as a starting point for further studies toward a complete theory of the computational complexity of bilevel optimization.

\bibliography{example_paper}
\bibliographystyle{icml2026}

\newpage
\appendix
\onecolumn
% \section{You \emph{can} have an appendix here.}
{\bf \Large Appendix}

\section{Some Discussions on Future Works}
There are several promising directions for future research. First, even for the simplified and practically meaningful setting in which the lower level function is quadratic, the optimal complexity remains open. We suggest that closing these gaps may require first studying this simpler yet meaningful quadratic setting. Moreover, 
our constructions suggest that sharper lower and upper bounds may be obtained by designing algorithms that exploit higher-order structure of the  lower-level function. Second, closing the large gaps between the existing upper bounds and our lower bounds, especially the gap of order $\kappa^{2}$ in the deterministic case and the dependence on $\epsilon$ in the stochastic case, represents an important challenge. Third, another compelling direction is to investigate whether an $\Omega(\epsilon^{-6})$ lower bound can be achieved under the standard regime where the smoothness constants $L_f$ and $L_g$ are $\Theta(1)$, a question that remains unresolved. Finally, extending the lower bound framework to broader variants of bilevel optimization, including settings with constraints, approximate inner solvers, or distributed architectures, may deepen the understanding of the fundamental limits of bilevel learning.

Overall, we hope that the insights developed in this paper serve as a starting point for further studies toward a complete theory of the computational complexity of bilevel optimization.

\section{Analysis and Proof Outline for Stochastic Lower Bound}\label{sec:proofstoc}
We use the following construction $\left\{f_{sc}(\x;\widetilde\y),g_{sc}(\x;\widetilde\y)\right\}$ as the hard instance in the stochastic setting. For any $\x\in\C^{T}_{r_x\lambda/n}$ and $\widetilde\y\in\C^{n(T+1)}_{r_y\lambda}$, 
\begin{align}\label{obj:construction_stoc}
f_{sc}(\x;\mathbf{\widetilde y}) &= \sum_{i=1}^T \frac{\lambda^2L_f}{L}\left[
\Psi\Big(-\frac{C_l}{\lambda}y^{(i-1)}_n\Big)\Phi\Big(-\frac{C_r}{\lambda}y^{(i)}_1\Big) - \Psi\Big(\frac{C_l}{\lambda}y^{(i-1)}_n\Big)\Phi\Big(\frac{C_r}{\lambda}y_1^{(i)}\Big) 
\right] 
% + \Big(C-\frac{\lambda^2L_f}{L}\Big) y_1^{(0)}
\nonumber  \\
g_{sc}(\x;\mathbf{\widetilde y}) &= \sum_{i=0}^T \Big[ \frac{L_gn^2}{2(4n^2+1)}
(\y^{(i)})^\top \Big(\frac{1}{n^2}I_n + A\Big)\y^{(i)} - L_g(\bb_x^{(i)})^\top\y^{(i)}
\Big],
\end{align}
where $r_x$ and $r_y$ are positive numerical constants from the hypercube sizes, chosen such that 
$r_y \ge 10 r_x$ and $r_x > \frac{1}{\widetilde C}$, where $\widetilde C > 0$ is the numerical constant defined in \cref{eq:tildeC}. 
The constants $C_l, C_r,$ and $L$ are the same as in the deterministic setting. 
The parameter $\lambda$ is selected to satisfy $\lambda \sqrt{T} = \oo(1)$, and its exact form will be specified later. Recall that $x_0=\frac{\lambda}{\widetilde C n}<\frac{r_x\lambda}{n}\in \C^{1}_{r_x\lambda/n}$. 

% where $r_x$ and $r_y$ in the sizes of hypercubes are positive numerical constants, and are chosen such that $r_y\geq 10r_x$, $r_x> \frac{1}{\widetilde C}$, $\widetilde C>0$ is a numerical constant defined in \cref{eq:tildeC}, 
% $C_l,C_r,L$ are the same as in the deterministic setting, and the parameter $\lambda$ is chosen so that $\lambda\sqrt{T}=\oo(1)$, and its exact form will be specified later.

% Similarly to the deterministic setting,  $C_l,C_r,L$ a

% $\lambda\sqrt{T}=\oo(1)$.

The following lemma shows that, with appropriately chosen $r_x$ and $r_y$, the lower-level minimizer $\widetilde{\y}^*$ lies within the selected bounded domain.
\begin{lemma}\label{le:yindomain}
If $r_y\geq 10r_x$, the lower-level minimizer $\widetilde\y^*$ of the instance in \cref{obj:construction_stoc} satisfies $\widetilde\y^*\in \C^{n(T+1)}_{r_y\lambda}$.
\end{lemma}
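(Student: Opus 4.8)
The plan is to compute the lower-level minimizer $\widetilde\y^*$ explicitly and bound its $\ell_\infty$ norm block-by-block. Since $g_{sc}(\x;\cdot)$ is a separable quadratic across the blocks $\y^{(0)},\ldots,\y^{(T)}$, and from the deterministic analysis we already know that $(\y^{(i)})^* = S'\, \bb_x^{(i)}$ where $S' := \tfrac{4n^2+1}{n^2}\bigl(\tfrac{1}{n^2}I_n + A\bigr)^{-1}$ and $\bb_x^{(i)} = x_i\,\e_n$ (with $x_0 = \lambda/(\widetilde C n)$). Hence $(\y^{(i)})^* = x_i\, (S'\e_n)$, so $\|(\y^{(i)})^*\|_\infty = |x_i|\cdot \|S'\e_n\|_\infty = |x_i|\cdot \max_{1\le j\le n} S'_{j,n}$. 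The first step is therefore to bound $\max_j S'_{j,n}$; since $S' = \tfrac{4n^2+1}{n^2} S$ with $S = (A+\tfrac{1}{n^2}I_n)^{-1}$ as in Lemma~\ref{le:M}, and $4 \le \tfrac{4n^2+1}{n^2} \le 5$, it suffices to control $\max_j S_{j,n}$.

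The key sub-step is the observation that the column $S_{\cdot,n}$ of the resolvent of the discrete Laplacian is monotone in the row index: $S_{1,n} \le S_{2,n} \le \cdots \le S_{n,n}$, so the max over $j$ equals $S_{n,n}$, which by Lemma~\ref{le:M} is at most $C n = (1+\tfrac{\pi^2}{12})n$. (This monotonicity can be seen from the explicit form of the inverse of a tridiagonal matrix, or from a discrete maximum-principle / comparison argument: $S_{\cdot,n}$ solves $(A + \tfrac1{n^2}I)u = \e_n$, whose solution is a discrete concave-type profile peaking at the last coordinate.) Combining, $\|S'\e_n\|_\infty \le 5Cn$, and recalling the definition $\widetilde C = C_l M_{n,n}/n = C_r M_{1,n}/n$ with $M = S'$, so in particular $\max_j S'_{j,n} = M_{n,n} = \widetilde C n / C_l$; since $C_l = \Theta(1)$ this is $\Theta(n)$, and more precisely we get $\max_j S'_{j,n} \le c_1 n$ for an explicit numerical constant $c_1$.

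The final step is to plug in the domain bound on $\x$: on $\C^{T}_{r_x\lambda/n}$ we have $|x_i| \le r_x\lambda/n$ for $i\ge 1$, and $|x_0| = \lambda/(\widetilde C n) \le r_x\lambda/n$ since $r_x > 1/\widetilde C$ by hypothesis. Therefore $\|(\y^{(i)})^*\|_\infty \le (r_x\lambda/n)\cdot c_1 n = c_1 r_x \lambda$ for every block $i$, hence $\|\widetilde\y^*\|_\infty \le c_1 r_x \lambda$. It then remains to check that the numerical constants work out so that $c_1 r_x \le r_y$; this is where the hypothesis $r_y \ge 10 r_x$ is used — one verifies that $c_1 \le 10$ (which follows from $c_1 \le 5C = 5(1+\pi^2/12) < 10$), so $\|\widetilde\y^*\|_\infty \le 5C r_x\lambda \le 10 r_x\lambda \le r_y\lambda$, giving $\widetilde\y^* \in \C^{n(T+1)}_{r_y\lambda}$.

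The main obstacle is the monotonicity/peaking claim for the resolvent column $S_{\cdot,n}$, i.e., establishing that $\max_{1\le j\le n} S_{j,n} = S_{n,n}$ rather than some interior index, so that the already-proven bound $S_{n,n}\le Cn$ from Lemma~\ref{le:M} can be invoked directly; once that is in hand the rest is just tracking the $O(1)$ constants and using $r_x > 1/\widetilde C$ and $r_y \ge 10 r_x$. If one prefers to avoid the monotonicity argument, an alternative is to bound $\|S\|_\infty$ (max absolute row sum) directly — since $A$ has nonnegative inverse-type structure and $\|S\|_\infty$ can be estimated via $\|S e\|_\infty$ for the all-ones vector $e$ — which also yields $\max_j S_{j,n}\le \|S\|_\infty = O(n)$ with an explicit constant, and then one checks that constant is $\le 10$.
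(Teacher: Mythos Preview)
Your proposal is correct and follows essentially the same approach as the paper: compute the block minimizers $(\y^{(i)})^* = M\bb_x^{(i)}$, reduce to bounding $\max_j M_{j,n}$, invoke the monotonicity of the last column of $S=(A+\tfrac1{n^2}I_n)^{-1}$ (which the paper isolates as a separate lemma, proved via a tridiagonal $M$-matrix argument on the forward differences) together with \Cref{le:M} to get $\max_j M_{j,n}\le 5Cn$, and then check $5C<10$. Your handling of the $i=0$ block via $r_x>1/\widetilde C$ is in fact slightly more explicit than the paper's own proof.
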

% Based on \Cref{le:yindomain}, we establish the following lemma to provide several properties of the instance in \cref{obj:construction_stoc} that will be used in the proof of main theorem. 
Building on \Cref{le:yindomain}, we establish the following lemma, which provides several properties of the instance in \cref{obj:construction_stoc} that will be used in the proof of the main theorem.
\begin{lemma}\label{le:propsc}
  Suppose $r_y\geq 10r_x$, $r_x> \frac{1}{\widetilde C}$, and $\lambda\sqrt{T}=\oo(1)$. The functions $f_{sc}$ and $g_{sc}$ satisfy:
    \begin{enumerate}[(a)]
    \item $f_{sc}$ and $g_{sc}$ satisfy all items $1$-$4$ in \Cref{def:fc}.
        \item $H_{sc}(\mathbf{0})-\min_\x H_{sc}(\x)\leq \frac{12\lambda^2L_fT}{L}$.
        \item $H_{sc}(\x)$ is $L_h$-smooth with $L_h=\frac{c_0 n^2L_f}{L}$ for some numerical constant $c_0$.
        \item For any $(\x,\widetilde\y)\in\C^{T}_{r_x\lambda/n},\times\C^{n(T+1)}_{r_y\lambda}$, we have $\|\nabla_{\widetilde\y}f_{sc}(\x;\widetilde\y)\|_\infty\leq  \frac{c_1\lambda L_f}{L}, \|\nabla_{\widetilde\y}g_{sc}(\x;\widetilde\y)\|_\infty\leq 2 L_gr_y\lambda$, and  $\|\nabla_\x g_{sc}(\x;\widetilde\y)\|_\infty\leq L_g r_y\lambda$,  where $c_1$ is a numerical constant.
    \end{enumerate}
\end{lemma}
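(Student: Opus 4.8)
\textbf{Proof proposal for Lemma~\ref{le:propsc}.}

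The plan is to verify the four items essentially by reusing the deterministic analysis and only tracking how the hypercube restriction changes the estimates. For item~(a), I would copy the verification block following \cref{obj:construction}: the lower-level function $g_{sc}$ is literally the same quadratic-plus-linear form, so its $\mu$-strong convexity in $\widetilde\y$ (via $n \le \sqrt{(L_g-\mu)/(4\mu)}$) and its $L_g$-smoothness (via $\|\nabla^2_{\widetilde\y}g\|_2 = L_g$, $\|\nabla^2_{\x,\widetilde\y}g\|_2 = L_g$, $\nabla^2_\x g = 0$) carry over verbatim; item~4 of \Cref{def:fc} follows since the Hessians of $g_{sc}$ are constant, hence $\rho = 0$. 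The $L_f$-smoothness of $f_{sc}$ is identical to the deterministic argument (choosing $L = C_M$), and the bounded-gradient condition (item~2 of \Cref{def:fc}) is exactly the third validation step: on the hypercube $\|\nabla_{\widetilde\y}f_{sc}\|_2 \le \frac{C_0 L_f}{L}\lambda\sqrt{T}$, which is $\oo(1)$ under the hypothesis $\lambda\sqrt{T} = \oo(1)$. For item~(b), I would invoke \Cref{le:yindomain} to conclude that the unconstrained lower-level minimizer $\widetilde\y^*$ is the minimizer over $\C^{n(T+1)}_{r_y\lambda}$ as well, so $H_{sc}(\x) = f_{sc}(\x;\widetilde\y^*(\x))$ has exactly the same closed form as the deterministic $H(\x)$; then \Cref{le:function} gives the bound $\frac{12\lambda^2 L_f T}{L}$ directly.

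For item~(c), I would differentiate the closed form of $H_{sc}(\x)$ twice. Since $H_{sc}$ is a sum of terms of the form $\frac{\lambda^2 L_f}{L}[\Psi(\pm \widetilde C x_{i-1}/\lambda)\Phi(\mp\widetilde C x_i/\lambda)]$ using \cref{eq:tildeC}, each second derivative picks up two factors of $\widetilde C/\lambda$, an overall $\lambda^2 L_f/L$, and products of $\Psi,\Psi',\Psi'',\Phi,\Phi',\Phi''$, all of which are bounded by numerical constants (\Cref{le:phipsi}, items~3 and~4). So $\|\nabla^2 H_{sc}(\x)\|_2$ is bounded by a numerical constant times $\widetilde C^2 L_f/L$; to get the $n^2$ scaling claimed, note $\widetilde C = C_l M_{n,n}/n = \Theta(n)$ by \Cref{le:M} (since $M_{n,n} = \Theta(n^2)$), which yields $L_h = c_0 n^2 L_f/L$. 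The tridiagonal structure of the Hessian keeps the spectral norm bounded via the same $\|A\|_2 \le 4$-type argument used for $f$. Item~(d) is a direct coordinate-wise bound: for $\nabla_{\widetilde\y}f_{sc}$, the nonzero entries are the same $2(T+1)$ entries identified in the deterministic validation, each bounded by $\frac{c_1\lambda L_f}{L}$ (numerical constant times $\lambda L_f/L$) using boundedness of $\Psi,\Phi'$; for $\nabla_{\widetilde\y}g_{sc}$, the gradient is $\frac{L_g n^2}{4n^2+1}(\frac{1}{n^2}I + A)\widetilde\y - L_g\bb_x$, so on $\widetilde\y \in \C^{n(T+1)}_{r_y\lambda}$ the infinity norm is at most $L_g\cdot\frac{n^2}{4n^2+1}\cdot(\frac{1}{n^2}+\|A\|_\infty)\cdot r_y\lambda + L_g\|\bb_x\|_\infty \le 2L_g r_y\lambda$ (using $\|A\|_\infty = 4$, $\frac{n^2}{4n^2+1} \le \frac14$, and $\|\bb_x\|_\infty = |x_i| \le r_x\lambda/n \le r_y\lambda$); for $\nabla_\x g_{sc}$, the $i$-th block is $-L_g y^{(i)}_n$, bounded by $L_g r_y\lambda$.

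I expect the only nontrivial step to be item~(c), specifically getting the right power of $n$ in $L_h$: one must be careful that $\widetilde C$ is not an $\oo(1)$ constant but rather $\Theta(n)$ — it is a numerical constant only in the sense of being independent of $\mu$ and $\epsilon$, while still scaling with the internal dimension $n$. Tracking this correctly (and similarly confirming that the $\lambda$-dependence cancels to leave a clean $L_h = c_0 n^2 L_f/L$) is where a careless reading of the deterministic section could go wrong. Everything else is bookkeeping that transfers directly from the deterministic construction, now made legitimate by the domain restriction and \Cref{le:yindomain}.
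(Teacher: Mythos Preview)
Your treatment of items (a), (b), and (d) is correct and matches the paper's proof essentially line for line. The issue is in item (c), where your final bound $L_h = c_0 n^2 L_f/L$ is right but your derivation contains two compensating errors. First, the argument of $\Psi$ and $\Phi$ in the closed form of $H_{sc}$ is \emph{not} $\widetilde C x/\lambda$ but $\widetilde C n\, x/\lambda$: by \cref{eq:tildeC} we have $C_l M_{n,n} = \widetilde C n$ (and likewise $C_r M_{1,n} = \widetilde C n$), so each differentiation in $x$ brings down a factor of $\widetilde C n/\lambda$, and two derivatives together with the prefactor $\lambda^2 L_f/L$ give entries of order $\widetilde C^2 n^2 L_f/L$. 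Second, your claim that $M_{n,n} = \Theta(n^2)$ and hence $\widetilde C = \Theta(n)$ directly contradicts \Cref{le:M}, which gives $S_{n,n} = \Theta(n)$ and therefore $M_{n,n} = \frac{4n^2+1}{n^2} S_{n,n} = \Theta(n)$; consequently $\widetilde C = C_l M_{n,n}/n = \Theta(1)$ is a genuine numerical constant, exactly as the paper states after \cref{eq:tildeC}. Your last paragraph's warning that ``$\widetilde C$ is not an $\oo(1)$ constant but rather $\Theta(n)$'' is therefore backwards.

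In short, the $n^2$ in $L_h$ comes from the explicit $n$ in the chain-rule factor $\widetilde C n/\lambda$, not from any hidden growth of $\widetilde C$. Once you correct the argument of $\Psi,\Phi$ to $\widetilde C n\, x/\lambda$ and recognize $\widetilde C = \oo(1)$, the computation becomes the one-line observation the paper gives: $\nabla^2 H_{sc}$ is tri-diagonal with each nonzero entry of size $\oo(n^2 L_f/L)$, hence $\|\nabla^2 H_{sc}\|_2 = \oo(n^2 L_f/L)$.
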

Similarly to \Cref{le:grad}, we then provide a lower bound of the hyper-gradient norm when the algorithm has not yet reached the end of the chain. 
\begin{lemma}\label{le:scnorm}
   Suppose $r_x> \frac{1}{\widetilde C}$. If $x_i< \frac{\lambda}{\widetilde C n}$ for some $i\leq T$, then, we have 
    \begin{align*}
        L_h\|\pp_\X[\x-(1/L_h)\nabla H_{sc}(\x)] - \x\|_2 \geq  \frac{c_2L_f n \lambda}{L},
    \end{align*}
    where $c_2>0$ is a numerical constant.
\end{lemma}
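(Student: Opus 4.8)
The plan is to mirror the deterministic \Cref{le:grad}, adapting it to the box domain $\X = \C^{T}_{r_x\lambda/n}$, under which the natural stationarity measure is the gradient mapping $G(\x) := L_h\big(\x - \pp_\X[\x - (1/L_h)\nabla H_{sc}(\x)]\big)$ rather than $\nabla H_{sc}(\x)$ itself. I would rely on three structural facts. First, by \Cref{le:yindomain} the lower-level minimizer $\widetilde\y^*$ of the stochastic instance lies in $\C^{n(T+1)}_{r_y\lambda}$, hence coincides with the unconstrained one, so on its domain $H_{sc}$ has exactly the closed form of the deterministic $H$ from \Cref{obj:construction}. Second, by the argument underlying \Cref{le:grad}, the hypothesis $x_i<\lambda/(\widetilde Cn)$ for some $i\le T$ forces the existence of a coordinate $j$ — the first index not yet progressed past the unit threshold in the rescaled variable $u_j := \widetilde C n\,x_j/\lambda$ — at which $|\nabla_j H_{sc}(\x)| \ge \lambda L_f\widetilde C n/L$ while $|x_j|<\lambda/(\widetilde Cn)$. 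Third, projection onto a hypercube acts coordinate-wise by clipping, so $G(\x)$ decouples over coordinates and $\|G(\x)\|_2 \ge |[G(\x)]_j|$.

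Next I would record the two quantitative inputs. By \Cref{le:propsc}(c), $H_{sc}$ is $L_h$-smooth with $L_h = c_0 n^2 L_f/L$ for a numerical constant $c_0$. And, crucially, the standing assumption $r_x > 1/\widetilde C$ guarantees that the coordinate $j$ above lies strictly inside its constraint interval $[-r_x\lambda/n,\,r_x\lambda/n]$: since $|x_j|<\lambda/(\widetilde Cn)$, the distance from $x_j$ to each endpoint is at least $\tfrac{\lambda}{n}\big(r_x - 1/\widetilde C\big)>0$.

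Then I would split into two cases according to whether the unconstrained step leaves the box in coordinate $j$. If $x_j - (1/L_h)\nabla_j H_{sc}(\x) \in [-r_x\lambda/n,\,r_x\lambda/n]$, clipping is inactive in coordinate $j$, so $[G(\x)]_j = \nabla_j H_{sc}(\x)$ and hence $\|G(\x)\|_2 \ge \lambda L_f\widetilde C n/L$. Otherwise the clip sends coordinate $j$ to one of $\pm r_x\lambda/n$; because $|x_j| < \lambda/(\widetilde Cn)$ the resulting displacement in that coordinate is at least $\tfrac{\lambda}{n}(r_x - 1/\widetilde C)$, so $|[G(\x)]_j| \ge L_h \cdot \tfrac{\lambda}{n}(r_x - 1/\widetilde C) = c_0(r_x - 1/\widetilde C)\,nL_f\lambda/L$. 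In both cases $\|G(\x)\|_2 \ge c_2\,nL_f\lambda/L$ with $c_2 := \min\{\widetilde C,\ c_0(r_x - 1/\widetilde C)\}$, a numerical constant, which is the claim.

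The main obstacle — the only genuinely new point relative to the deterministic \Cref{le:grad} — is that projection can in principle annihilate the stationarity violation: a large gradient pointing out of the box contributes nothing to $G(\x)$ once clipped. The argument above circumvents this precisely because the not-yet-progressed coordinate $j$ sits far from the boundary (this is exactly what $r_x > 1/\widetilde C$ buys), so even in the clipped case the gradient mapping retains a displacement of order $\lambda/n$, which after multiplication by $L_h = \Theta(n^2 L_f/L)$ matches the target order $nL_f\lambda/L$. A secondary technical point is to justify, as in the proof of \Cref{le:grad}, that the identified coordinate $j$ truly satisfies $|x_j| < \lambda/(\widetilde Cn)$; this follows by tracking the smallest coordinate not yet progressed past the unit threshold, exactly as there.
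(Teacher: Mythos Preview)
Your proposal is correct and follows essentially the same approach as the paper's proof: locate a coordinate $j$ with $|x_{j-1}|\ge \lambda/(\widetilde Cn)$ and $|x_j|<\lambda/(\widetilde Cn)$ so that $|\nabla_j H_{sc}(\x)|\ge \lambda L_f\widetilde Cn/L$, then split into the two cases according to whether the step $x_j-(1/L_h)\nabla_j H_{sc}(\x)$ stays in $[-r_x\lambda/n,\,r_x\lambda/n]$, using $L_h=c_0n^2L_f/L$ and $r_x>1/\widetilde C$ in the clipped case. The paper carries out exactly this two-case computation with the same final constant $c_2=\min\{\widetilde C,\,c_0(r_x-1/\widetilde C)\}$.
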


Based on all the above auxiliary lemmas, we begin to prove our main theorem. 
\begin{proof}[{\bf Proof of \Cref{th:sclower}}]
% Based on $(d)$ of \Cref{le:propsc}, 
% we next build a probability-$p$ zero-chain in the same way as \cite{arjevani2023lower}, with a slight difference. In  \cite{arjevani2023lower}, the idea is to perturb the gradient only at the next coordinate to activate, so that its value is revealed with probability $p$. 
Based on part~(d) of \Cref{le:propsc}, we now construct a probability-$p$ zero-chain following the approach of \citealt{arjevani2023lower}, with a slight modification. 
In \citealt{arjevani2023lower}, the key idea is to perturb the gradient only at the next coordinate to be activated, so that this coordinate is revealed with probability $p$.  
For our zero-chain given in \cref{subspaces},  let $i^*\in\big\{n+1,...,(T+1)n\big\}$ be the next coordinate to activate. 
% Note that the activation of each $x_i$ is fully dependent on if $y_n^{(i)}$ is activated or not. 
% Based on this observation, we can focus only on the zero-chain for $\widetilde\y$. Thus, let $i^*(\widetilde\y)\in\big\{n+1,...,(T+1)n\big\}$ be the next coordinate to activate in the zero-chain of $\widetilde \y$. For example, if $i^*(\widetilde\y)=Kn+k$, then it means that $y_{k}^{(K)}$ is the next coordinate to activate. 
Thus, we can define the stochastic gradient as follows.
\begin{itemize}
    \item When $i^* \text{ mod } n \neq 1$, perturb the gradients at the coordinate $i=i^*$ as $\frac{\xi}{p} \frac{\partial g_{sc}(\x;\widetilde\y)}{\partial \widetilde y_{i}}$,
    % , $\frac{\xi}{p} \frac{\partial f_{sc}(\x;\widetilde\y)}{\partial \widetilde y_{i}}$, 
    where $\xi \sim \mathrm{Bernoulli}(p)$. The gradients at all other coordinates remain unchanged and receive no perturbation.

    \item When $i^* \text{ mod } n = 1$, perturb the gradients at the coordinate $i=i^*$ as
    % $\frac{\xi}{p} \frac{\partial f_{sc}(\x;\widetilde\y)}{\partial \widetilde y_{i}}$, 
    $\frac{\xi}{p} \frac{\partial g_{sc}(\x;\widetilde\y)}{x_j}$, where $j=(i^*-1)/n$ and $\xi \sim \mathrm{Bernoulli}(p)$. The gradients at all other coordinates remain unchanged and receive no perturbation.
\end{itemize}
% Note that the above stochastic gradients are unbiased. Based on \Cref{le:pchain} and $(d)$ of \Cref{le:propsc},
% we have that our construction in \cref{obj:construction_stoc} with the above stochastic oracles is a probability-$p$ zero-chain with the variance of oracles bounded by 
Note that in the above stochastic oracles, we {\bf do not} perturb the gradients of $f$. It can be verified that 
the stochastic gradients defined above are unbiased. 
Using \Cref{le:pchain} together with part~(d) of \Cref{le:propsc}, 
we conclude that our construction in \cref{obj:construction_stoc}, equipped with these stochastic oracles, forms a probability-$p$ zero-chain, and the variance of the oracles is bounded by
\begin{align*}
   c_3  L^2_g  \lambda^2 \left(\frac{1-p}{p}\right),
\end{align*}
where the bound follows from $(d)$ of \Cref{le:propsc}, and $c_3$ is a positive numerical constant. Thus, to ensure the variance is bounded by $\sigma^2$, it suffices to choose 
\begin{align}\label{eq:valuep}
p=\min \left\{1, c_3 \frac{L^2_g  \lambda^2}{\sigma^2}\right\}.
\end{align}
Then, based on \Cref{le:p_chain} and the stochastic oracles constructed above, we have that with probability $1-\delta$, $x_T=0$ if 
\begin{align}\label{eq:tzero}
    t\leq \frac{(n-1)T-1-\log(\frac{1}{\delta})}{2p}.
\end{align}
Based on the choice of $p$ in \cref{eq:valuep}, we have 
\begin{align*}
    \frac{(n-1)T-1-\log(\frac{1}{\delta})}{2p} \geq \frac{((n-1)T-1-\log(\frac{1}{\delta}))\sigma^2}{2c_3 L^2_g \lambda^2},
\end{align*}
which, together with \cref{eq:tzero}, yields that with probability $1-\delta$, $x_T=0$ for all 
\begin{align*}
    t\leq \frac{((n-1)T-1-\log(\frac{1}{\delta}))\sigma^2}{2c_3L^2_g \lambda^2}.
\end{align*}
This, with \Cref{le:scnorm}, implies that with probability $1-\delta$, $x_T=0$ for all $t\leq \frac{((n-1)T-1-\log(\frac{1}{\delta}))\sigma^2}{2c_3 L^2_g   \lambda^2}$, and hence 
    \begin{align*}
        L_h\|\pp_\X[\x^t-(1/L_h)\nabla H_{sc}(\x^t)] - \x^t\|_2 \geq  \frac{c_2L_f n \lambda}{L},
    \end{align*}
which, by setting $\lambda = \frac{2L\epsilon}{c_2L_fn}$, yields that $ L_h\|\pp_\X[\x^t-(1/L_h)\nabla H_{sc}(\x^t)] - \x^t\|_2\geq 2\epsilon.$ Set $\delta=\frac{1}{2}$. Then,  for all $t\leq \frac{((n-1)T-1-\log(\frac{1}{\delta}))\sigma^2}{2c_3 L^2_g \lambda^2}$, 
\begin{align*}
    \mathbb{E} \left [L_h\|\pp_\X[\x^t-(1/L_h)\nabla H_{sc}(\x^t)] - \x^t\|_2  \right] \geq \frac{1}{2} (2\epsilon) =\epsilon.
\end{align*}
Based on $(b)$ of \Cref{le:propsc}, we have $\frac{12\lambda^2 L_f T}{L}=\Delta$, which implies that $T= \frac{\Delta L}{12\lambda^2 L_f}$. 
Thus, to achieve an $\epsilon$-accurate stationary point, the algorithm must use at least 
\begin{align*}
    \Omega \left(\frac{nT\sigma^2}{L^2_g \lambda^2} \right) = \Omega \left(\frac{n\Delta\sigma^2}{L_fL^2_g \lambda^4} \right) = \Omega\left(\frac{n^5 L_f^3\Delta\sigma^2 }{L^2_g \epsilon^4} \right),
\end{align*}
    which, together with $n=\sqrt{\kappa}$, finishes the proof. 
\end{proof}

% \begin{align}
%     G_f(\x,\widetilde\y;\xi)_i = 
%     \begin{cases}
%     \frac{\xi}{p} \frac{\partial g(\x;\widetilde\y)}{\partial \widetilde y_i} \quad \text{if} \quad i = i^*(\x,\widetilde\y) \quad  \text{and} \quad  i \text{ mod }  n \neq 1
%     \\
%     \frac{\partial g(\x;\widetilde\y)}{\partial \widetilde y_i} 
%     \end{cases}
% \end{align}

%%%%%%%%%%%%%%%%%%%%%%%%%%%%%%%%%%%%%%%%%%%%%%%%%%%%%%%%%%%%%%%%%%%%%%%%%%%%%%%
%%%%%%%%%%%%%%%%%%%%%%%%%%%%%%%%%%%%%%%%%%%%%%%%%%%%%%%%%%%%%%%%%%%%%%%%%%%%%%%
% APPENDIX
%%%%%%%%%%%%%%%%%%%%%%%%%%%%%%%%%%%%%%%%%%%%%%%%%%%%%%%%%%%%%%%%%%%%%%%%%%%%%%%
%%%%%%%%%%%%%%%%%%%%%%%%%%%%%%%%%%%%%%%%%%%%%%%%%%%%%%%%%%%%%%%%%%%%%%%%%%%%%%%
% \newpage
% \onecolumn
\allowdisplaybreaks
\appendix
\vspace{0.2cm}

% \noindent{\Large\bf Supplementary Materials} 
% \vspace{0.2cm}

\section{Proofs for Deterministic Lower Bound}
\subsection{Proof of \Cref{le:M}}
% \begin{lemma}\label{le:M}
% Let $A\in\mathbb{R}^{n\times n}$ be the tri-diagonal matrix
% \[
% A=\begin{bmatrix}
% 1 & -1 \\
% -1 & 2 & -1 \\
% & \ddots & \ddots & \ddots \\
% & & -1 & 2 & -1\\
% & & & -1 & 1
% \end{bmatrix},
% \qquad
% S := \bigl(A + \tfrac{1}{n^{2}} I_n\bigr)^{-1}.
% \]
% Then for every integer $n\ge1$,
% \[
% c\,n \le S_{1,n},S_{n,n} \le C\,n,
% \qquad
% c := 1-\frac{\pi^2}{12},\ \
% C := 1+\frac{\pi^2}{12}.
% \]
% \end{lemma}
It is straightforward to verify that $c \le S_{1,1} = 1 \le C$, so the claim holds for $n=1$. 
For the remainder of the proof, we assume $n \ge 2$. 
Set $s:=1/n^{2}$.  
The eigenpairs of $A$ are
\[
\mu_k = 2\Bigl(1-\cos\Bigl(\frac{(k-1)\pi}{n}\Bigr)\Bigr),\qquad
k=1,\dots,n,
\]
with orthonormal eigenvectors
\[
q_1(j)=\frac{1}{\sqrt{n}},\qquad
q_k(j)=\sqrt{\tfrac{2}{n}}
\cos\Bigl(\frac{(k-1)(j-\tfrac12)\pi}{n}\Bigr),\ k\ge2.
\]
Thus
\[
A = Q \Lambda Q^\top,\qquad
\Lambda=\operatorname{diag}(\mu_1,\dots,\mu_n),\qquad
Q=[q_1\,\dots\,q_n].
\]
Hence
\[
S = (A+sI_n)^{-1}
= Q(\Lambda+sI_n)^{-1}Q^\top
= \sum_{k=1}^n \frac{1}{\mu_k+s}\, q_k q_k^\top,
\]
so we can express $S_{i,j}$ as 
\[
S_{i,j} = \sum_{k=1}^n \frac{q_k(i)q_k(j)}{\mu_k+s}.
\]
Note that  
$\frac{q_1(i)q_1(j)}{s}
= \frac{1/n}{1/n^2}
= n.$ Thus, we have  
\[
S_{i,j} = n + R_{i,j},\qquad
R_{i,j} := \sum_{k=2}^n \frac{q_k(i)q_k(j)}{\mu_k+s}.
\]

\paragraph{Higher eigenmodes.}
Because $|q_k(\cdot)|\le\sqrt{2/n}$,
\[
|q_k(i)q_k(j)|\le \frac{2}{n}.
\]
Also for $k\ge2$,
\[
\mu_k = 2(1-\cos(\tfrac{(k-1)\pi}{n}))
\ge \frac{4(k-1)^2}{n^2},
\]
so
\[
\frac{1}{\mu_k+s}
\le \frac{n^2}{4(k-1)^2}.
\]
Thus
\[
|R_{i,j}|
\le \sum_{k=2}^n \frac{2}{n}\cdot\frac{n^2}{4(k-1)^2}
= \frac{n}{2}\sum_{m=1}^{n-1}\frac{1}{m^2}
\le \frac{\pi^2}{12}\,n.
\]

\paragraph{Final bounds.}
\[
S_{n,n} = n+R_{n,n},\ R_{n,n}\ge 0,
\qquad
S_{1,n} = n+R_{1,n},\ |R_{1,n}|\le\frac{\pi^2}{12}n.
\]
Hence for all $n\ge2$,
\[
\Bigl(1-\frac{\pi^2}{12}\Bigr)n
\le S_{1,n}, S_{n,n}
\le \Bigl(1+\frac{\pi^2}{12}\Bigr)n.
\]
Then, the proof is complete. 

\subsection{Proof of \Cref{le:grad}}
    Note that $x_0=\frac{\lambda}{C_lM_{n,n}} = \frac{\lambda}{\widetilde C n}$. Since $|x_0|\geq \frac{\lambda}{\widetilde C n}$ and $|x_i|<\frac{\lambda}{\widetilde C n}$, we can find some $0<j\leq i$ such that $|x_{j-1}|\geq \frac{\lambda}{\widetilde C n}$ and $|x_{j}|< \frac{\lambda}{\widetilde C n}$.  Thus, look at 
    \begin{align*}
        \frac{\partial H(\x)}{\partial x_j} =& -\frac{\lambda L_f\widetilde C n }{L}\left[
\Psi\Big(-\frac{\widetilde C n}{\lambda}x_{j-1}\Big)\Phi^\prime\Big(-\frac{\widetilde C n}{\lambda}x_j\Big) + \Psi\Big(\frac{\widetilde C n}{\lambda}x_{j-1}\Big)\Phi^\prime\Big(\frac{\widetilde C n}{\lambda}x_j\Big)
\right] \nonumber
\\ &- \frac{\lambda L_f \widetilde C n }{L}\left[
\Psi^\prime\Big(-\frac{\widetilde C n}{\lambda}x_{j}\Big)\Phi\Big(-\frac{\widetilde C n}{\lambda}x_{j+1}\Big) + \Psi^\prime\Big(\frac{\widetilde C n}{\lambda}x_{j}\Big)\Phi\Big(\frac{\widetilde C n}{\lambda}x_{j+1}\Big)
\right],
    \end{align*}
which, in conjunction with \Cref{le:phipsi} (items 2 and 4), implies that 
\begin{align*}
   \|\nabla H(\x)\|_2 \geq  \Big | \frac{\partial H(\x)}{\partial x_j}\Big | \geq \frac{\lambda L_f \widetilde C n}{L}.
\end{align*}
Then, the proof is complete. 

\subsection{Proof of \Cref{le:function}}
    First note that 
    \begin{align}
       H(\mathbf{0})= \frac{\lambda^2 L_f}{L}   \left [ \left(\Psi\Big(-\frac{C_l}{\lambda}M_{n,n}x_0\Big) - \Psi\Big(\frac{C_l}{\lambda}M_{n,n}x_0\Big)\right)\Phi(0)
       \right ]  \leq  0, 
    \end{align}
where the inequality follows because $\frac{C_l}{\lambda}M_{n,n}x_0\geq 0$ and from the definitions of $\Psi$ and $\Psi$ functions in \cref{eq:phipsi}. Furthermore, based on \Cref{le:phipsi} (item 4), we have that 
    \begin{align}
        H(\x) \geq -\frac{\lambda^2L_f}{L}\sum_{i=1}^T \Psi\Big(\frac{C_l}{\lambda}M_{n,n}x_{i-1}\Big)\Phi\Big(\frac{C_r}{\lambda}M_{1,n}x_i\Big) \geq   -\frac{12\lambda^2L_fT}{L},
    \end{align}
    which, combined with $H(\mathbf{0})\leq 0$, implies that 
    \begin{align*}
        H(\mathbf{0}) - \inf_{\x}H(\x) \leq \frac{12\lambda^2L_fT}{L},
    \end{align*}
    which finishes the proof. 

\section{Proofs for Stochastic Lower Bound}
\subsection{Auxiliary Lemmas}

For a probability-$p$ zero-chain, at each iteration, a new coordinate is discovered with probability at most $p$. Therefore, it takes at least $1/p$ steps in expectation to activate a new coordinate.  
The following lemma, adapted from \citealt{arjevani2023lower, li2021complexity}, shows that 
at least $\Omega(T/p)$ iterations are required to reach the end of a probability-$p$ zero-chain.
\begin{lemma}[Lemma~1 in {\citealt{arjevani2023lower}}]\label{le:p_chain}
Let $f : \mathcal{X} \to \mathbb{R}$, where $\mathcal{X} \subset \mathbb{R}^T$ satisfies  
$\operatorname{supp}\big(P_{\mathcal{X}}(\x)\big) = \operatorname{supp}(\x),\;\forall\, \x \in \mathbb{R}^T,$
and suppose $f$ is a probability-$p$ zero-chain with a stochastic first-order oracle.
Then, for any first-order algorithm, with probability at least $1 - \delta$, the $T$-th coordinate of $\x$ at the $t^{th}$ iteration,  satisfies
\[
x_T^{\,t} = 0, \qquad 
\forall\, t \le \frac{T - \log(1/\delta)}{2p}.
\]
\end{lemma}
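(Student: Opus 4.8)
\textbf{Proof sketch for \Cref{le:p_chain}.}
The plan is to run the standard zero-chain argument in the stochastic regime: track a single integer potential, show it advances by at most one coordinate per step and does so with conditional probability $\le p$, then couple to a binomial and apply a Chernoff bound. Let $\pi^t$ denote the largest index $i$ such that some point queried, or some stochastic gradient returned, through iteration $t$ has a nonzero $i$-th coordinate; since the algorithm is initialized at $\mathbf{0}$ we have $\pi^0=0$. Two consequences are immediate: $t\mapsto\pi^t$ is nondecreasing, and $x_T^t\neq0$ forces $\pi^t\ge T$. Using the zero-respecting property (the algorithm class of \Cref{de:alc}) together with the hypothesis $\su(P_{\mathcal X}(\x))=\su(\x)$ --- which guarantees the projection onto $\mathcal X$ never activates a new coordinate --- every point the algorithm can query through iteration $t$ has support contained in $\{1,\dots,\pi^t\}$. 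Applying the probability-$p$ zero-chain property at level $i=\pi^t+1$ then shows that the stochastic gradient at step $t+1$ has support in $\{1,\dots,\pi^t+1\}$ almost surely, so $\pi^{t+1}\le\pi^t+1$.

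The key probabilistic input is the first clause of the probability-$p$ zero-chain definition: conditioned on the entire history through iteration $t$, the advance event $\{\pi^{t+1}=\pi^t+1\}$ has probability at most $p$, uniformly over histories. To remove the adaptivity I would couple pathwise: on a common probability space take i.i.d.\ $\xi^1,\xi^2,\dots\sim\mathrm{Bernoulli}(p)$ and realize the dynamics so that $\pi$ is permitted to advance at step $s$ only when $\xi^s=1$ --- legitimate precisely because the conditional advance probability never exceeds $p$. Then $\pi^t\le S_t:=\sum_{s=1}^{t}\xi^s$ almost surely, with $S_t\sim\mathrm{Binomial}(t,p)$; equivalently, the number of iterations needed for $\pi$ to reach $T$ stochastically dominates a sum of $T$ i.i.d.\ $\mathrm{Geometric}(p)$ waiting times, whose mean is $T/p$.

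It remains to show that $\tau:=\lfloor (T-\log(1/\delta))/(2p)\rfloor$ is too early. By monotonicity of $S_t$, $\mathbb P\big(\exists\,t\le\tau:\ x_T^t\neq0\big)\le\mathbb P(\pi^\tau\ge T)\le\mathbb P(S_\tau\ge T)$. Since $\mathbb E[S_\tau]=\tau p\le (T-\log(1/\delta))/2$, i.e.\ $T\ge 2\,\mathbb E[S_\tau]+\log(1/\delta)$, a Chernoff bound on the upper tail of $S_\tau$ (equivalently, on the lower tail of the sum of $T$ geometric waiting times) gives $\mathbb P(S_\tau\ge T)\le\delta$; the factor $2$ in the denominator and the additive $-\log(1/\delta)$ in $\tau$ are exactly what is needed so that the Chernoff exponent dominates $\log(1/\delta)$. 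Taking complements yields the claim.

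The step I expect to be the main obstacle is making the conditional bound $\mathbb P\big(\pi^{t+1}=\pi^t+1\mid \mathcal H_t\big)\le p$ fully rigorous for a genuinely \emph{adaptive}, possibly randomized algorithm: one must check that no query point formed at step $t+1$ as a measurable function of the history $\mathcal H_t$ can have support outside the coordinate block $\{1,\dots,\pi^t\}$, which is precisely where the zero-respecting structure of \Cref{de:alc} and the support-preservation property $\su(P_{\mathcal X}(\cdot))=\su(\cdot)$ are used, so that the probability-$p$ zero-chain inequality applies at level $i=\pi^t+1$. Once this conditional bound is in hand, the coupling is immediate and the Chernoff estimate is routine.
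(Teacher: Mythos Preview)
The paper does not supply its own proof of this lemma: it is quoted verbatim as \cite[Lemma~1]{arjevani2023lower} and used as a black box. Your sketch is correct and is exactly the argument of \citet{arjevani2023lower}: track the progress index $\pi^t$, use the zero-respecting property and the support-preserving projection to ensure queries stay in $\{1,\dots,\pi^t\}$, bound the conditional advance probability by $p$, dominate $\pi^t$ by a $\mathrm{Binomial}(t,p)$ via coupling, and finish with a Chernoff bound.
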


\begin{lemma}\label{lem:monotone-column}
Recall $ S := \bigl(A + \tfrac{1}{n^{2}} I_n\bigr)^{-1}
$. For every $i=1,\dots,n$,
\[
S_{1,n} \le S_{i,n} \le S_{n,n}.
\]

\end{lemma}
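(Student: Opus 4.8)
\textbf{Proof proposal for \Cref{lem:monotone-column}.}

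The plan is to exploit the probabilistic / combinatorial interpretation of $S = (A + \tfrac{1}{n^2}I_n)^{-1}$ as a (scaled) Green's function of a random walk with killing. Concretely, $A + \tfrac{1}{n^2}I_n$ is an M-matrix: it is symmetric, has positive diagonal, nonpositive off-diagonal, and is strictly diagonally dominant once the $\tfrac{1}{n^2}I_n$ term is included. A standard fact about inverses of symmetric M-matrices is that all entries of $S$ are nonnegative; more is true here because the matrix is tridiagonal and irreducible, so in fact every entry $S_{i,j}>0$. Thus the quantities $S_{i,n}$ we wish to compare are all strictly positive, and the claim reduces to showing that the last column of $S$ is monotone nondecreasing in the row index $i$.

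The key step is the monotonicity of that last column. I would argue this by looking at the linear system $(A+\tfrac{1}{n^2}I_n)\,u = e_n$, whose solution $u$ is precisely the last column of $S$, i.e.\ $u_i = S_{i,n}$. Writing out the $i$-th equation for $1\le i\le n-1$ using the tridiagonal structure of $A$ gives a discrete-Laplacian-type recurrence: for interior rows, $-u_{i-1} + (2 + \tfrac{1}{n^2})u_i - u_{i+1} = 0$, and at the top boundary $(1+\tfrac{1}{n^2})u_1 - u_2 = 0$, i.e.\ $u_2 = (1+\tfrac{1}{n^2})u_1 > u_1$. From the interior recurrence we get $u_{i+1} - u_i = (u_i - u_{i-1}) + \tfrac{1}{n^2}u_i$, so the forward differences $d_i := u_{i+1}-u_i$ satisfy $d_i = d_{i-1} + \tfrac{1}{n^2}u_i$. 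Since all $u_i > 0$ (nonnegativity of $S$) and $d_1 = u_2 - u_1 = \tfrac{1}{n^2}u_1 > 0$, an induction on $i$ shows $d_i > 0$ for every $i = 1,\dots,n-1$. Hence $u_1 < u_2 < \cdots < u_n$, which is exactly $S_{1,n} \le S_{i,n} \le S_{n,n}$ for all $i$ (with strict inequality away from the endpoints).

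The main obstacle — really the only nontrivial input — is establishing that all entries of $S$ (equivalently, all $u_i$) are nonnegative, since the sign of $\tfrac{1}{n^2}u_i$ in the difference recurrence is what drives the induction. This is where I would invoke the M-matrix / diagonally-dominant structure: $A + \tfrac{1}{n^2}I_n$ has the form $D - B$ with $D$ diagonal positive and $B\ge 0$, and strict diagonal dominance gives that its inverse equals $D^{-1}\sum_{k\ge0}(D^{-1}B)^k$ with every summand entrywise nonnegative, so $S\ge 0$ entrywise. (Alternatively one can note $A$ is positive semidefinite so $A+\tfrac{1}{n^2}I_n$ is positive definite, hence invertible, and then the M-matrix argument gives the sign.) Once nonnegativity is in hand, the boundary computation $d_1 = \tfrac{1}{n^2}u_1 \ge 0$ and the one-line induction finish the proof; the whole argument is elementary and requires no eigenvalue analysis, in contrast to the proof of \Cref{le:M}.
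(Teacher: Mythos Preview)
Your proof is correct and follows essentially the same route as the paper: both write out the system $(A+\tfrac{1}{n^{2}}I_n)\,u=e_n$ for the last column of $S$, pass to the forward differences $d_i=u_{i+1}-u_i$, and conclude $d_i\ge 0$ via an M-matrix/inverse-nonnegativity argument. The only cosmetic difference is where that argument is applied: the paper assembles a second tridiagonal system $B'd=e_{n-1}$ for the differences and invokes inverse-nonnegativity of $B'$, whereas you invoke it directly on $A+\tfrac{1}{n^{2}}I_n$ to obtain $u\ge 0$ and then finish with the one-line recursion $d_i=d_{i-1}+\tfrac{1}{n^{2}}u_i$ by induction---a slight streamlining of the same idea.
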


\begin{proof}

Define $B := A + \frac{1}{n^{2}} I_n,
$ and $\vv\in\mathbb{R}^n$ the last column of $S$, i.e.,
$
\vv:=S_{\cdot,n}, v_i := S_{i,n},\ i=1,\dots,n.$ 
Since $\vv$ is the last column of $S=B^{-1}$, it solves the linear system
\[
B\vv = \e_n.
\]
Writing this componentwise, we obtain
\begin{align*}
(1+n^{-2})v_1 - v_2 &= 0,\\
-v_{i-1} + (2+n^{-2})v_i - v_{i+1} &= 0,\qquad i=2,\dots,n-1,\\
-v_{n-1} + (1+n^{-2})v_n &= 1.
\end{align*}
Define the forward differences
\[
d_i := v_{i+1} - v_i,\qquad i=1,\dots,n-1.
\]
We next derive a system of equations for $\dd=(d_1,\dots,d_{n-1})^\top. $ 
For $i=2,\dots,n-2$, subtracting the equation at index $i$ from that at index $i+1$ gives
\[
\bigl(-v_{i} + (2+n^{-2})v_{i+1} - v_{i+2}\bigr)
-
\bigl(-v_{i-1} + (2+n^{-2})v_i - v_{i+1}\bigr) = 0,
\]
which can be rewritten as
\[
-d_{i-1} + (2+n^{-2}) d_i - d_{i+1} = 0,
\qquad i=2,\dots,n-2.
\]
From the first equation, we obtain
\[
(1+n^{-2})v_1 - v_2 = 0
\quad\Longrightarrow\quad
(2+n^{-2})d_1 - d_2 = 0.
\]
From the last equation, we obtain
\[
-v_{n-1} + (1+n^{-2})v_n = 1
\quad\Longrightarrow\quad
-d_{n-2} + (2+n^{-2}) d_{n-1} = 1.
\]
Thus, the vector $\dd=(d_1,\dots,d_{n-1})^\top$ satisfies a tri-diagonal linear system
\[
B' \dd = \e_{n-1},
\]
where $B'\in\mathbb{R}^{(n-1)\times(n-1)}$ is the symmetric tri-diagonal matrix
\[
B' =
\begin{bmatrix}
2+n^{-2} & -1        \\
-1        & 2+n^{-2} & -1 \\
          & \ddots   & \ddots & \ddots \\
          &          & -1 & 2+n^{-2}
\end{bmatrix}.
\]
The matrix $B'$ is strictly diagonally dominant with positive diagonal entries
and nonpositive off-diagonal entries, hence an irreducible $M$-matrix \citep{berman1994nonnegative}.
It is therefore positive definite and its inverse is entrywise nonnegative:
\[
(B')^{-1} \ge 0 \quad \text{(entrywise)}.
\]
Since $\dd = (B')^{-1} \e_{n-1}$, we obtain
\[
d_i \ge 0,\qquad i=1,\dots,n-1.
\]
Equivalently,
\[
v_{i+1}-v_i = d_i \ge 0
\quad\Longrightarrow\quad
v_1 \le v_2 \le \dots \le v_n.
\]
The inequalities $S_{1,n} \le S_{i,n} \le S_{n,n}$ follow immediately from
this monotonicity.
\end{proof}

\subsection{Proof of \Cref{le:yindomain}}
Note that the minimizers $(\y^{(i)})^*,i=0,...,T$ take the forms of 
\begin{align*}
    (\y^{(i)})^* = \underbrace{\frac{4n^2+1}{n^2}\left(\frac{1}{n^2}I_n + A\right)^{-1}}_{M}\bb_{x}^{(i)}.
\end{align*}
Combining \Cref{le:M} and \Cref{lem:monotone-column}, we have for all $i=1,...,n$,
\begin{align*}
  4cn\leq  M_{i,n}\leq 5Cn,
\end{align*}
where $c=1-\frac{\pi^2}{12}$ and $C=1+\frac{\pi^2}{12}$. Thus, we have 
\begin{align*}
\|(\y^{(i)})^*\|_{\infty}\leq 5Cn|x_i|\leq 5C r_x\lambda< 10r_x\lambda < r_y\lambda,
\end{align*}
which finishes the proof. 

\subsection{Proof of \Cref{le:propsc}}
   The proof of (a) is identical to the deterministic case. 
The proof of (b) follows the same reasoning as in \Cref{le:function}. 
To establish (c), recall that 
    \begin{align}
        H_{sc}(\x)=& \sum_{i=1}^T \frac{\lambda^2L_f}{L}\left[
\Psi\Big(-\frac{C_l}{\lambda}M_{n,n}x_{i-1}\Big)\Phi\Big(-\frac{C_r}{\lambda}M_{1,n}x_i\Big) - \Psi\Big(\frac{C_l}{\lambda}M_{n,n}x_{i-1}\Big)\Phi\Big(\frac{C_r}{\lambda}M_{1,n}x_i\Big)
\right]. \nonumber 
    \end{align}
Then one can verify that $\nabla^{2} H_{sc}(\x)$ is a tri-diagonal matrix whose entries are all of order $\oo\!\left(\tfrac{L_f n^{2}}{L}\right)$. Consequently, 
$\big\|\nabla^{2} H_{sc}(\x)\big\|_{2}
= \oo\!\left(\tfrac{L_f n^{2}}{L}\right).$ To prove $(d)$, note that each coordinate of $\nabla_{\widetilde\y} f_{sc}(\x;\widetilde\y)$ takes an order of $\oo(\frac{\lambda L_f}{L})$, and hence $\|\nabla_{\widetilde\y} f_{sc}(\x;\widetilde\y)\|_\infty=\oo(\frac{\lambda L_f}{L})$.

For $\nabla_{\widetilde\y}g_{sc}(\x;\widetilde\y)$, note that 
\begin{align*}
   \left\| \frac{\partial g_{sc}(\x;\widetilde\y)}{\partial \y^{(i)}} \right\|_\infty = & \left\|\frac{L_gn^2}{4n^2+1}
 \Big(\frac{1}{n^2}I_n + A\Big)\y^{(i)} - L_g \bb_x^{(i)} \right\|_\infty \nonumber
 \\ \leq & \frac{L_g}{4}\left\| \frac{1}{n^2}I_n + A \right\|_\infty \left\|\y^{(i)}\right\|_\infty + L_g |x_i| \nonumber
 \\\leq & \frac{L_g}{4}\left(\frac{1}{n^2}+4\right)r_y\lambda + \frac{L_g r_x\lambda}{n} \nonumber
 \\\leq & \frac{5}{4} L_g r_y \lambda + \frac{1}{10}L_gr_y\lambda \leq 2 L_gr_y\lambda,
\end{align*}
which holds for all $i=0,...,T$. This implies that $\|\nabla_{\widetilde\y}g_{sc}(\x;\widetilde\y)\|_\infty\leq 2L_g r_y\lambda$.

For $\|\nabla_\x g_{sc}(\x;\widetilde\y)\|_\infty$, note that
\begin{align*}
    \left | \frac{\partial g_{sc}(\x;\widetilde\y) }{\partial x_i} \right| = L_g |y_n^{(i)}|\leq L_g r_y\lambda,
\end{align*}
which yields that $\|\nabla_\x g_{sc}(\x;\widetilde\y)\|_\infty\leq L_g r_y\lambda$. 
Then, the proof is complete. 

\subsection{Proof of \Cref{le:scnorm}}

Note that $x_0=\frac{\lambda}{C_lM_{n,n}} = \frac{\lambda}{\widetilde C n}$. Since $|x_0|\geq \frac{\lambda}{\widetilde C n}$ and $|x_i|<\frac{\lambda}{\widetilde C n}$, we can find some $0<j\leq i$ such that $|x_{j-1}|\geq \frac{\lambda}{\widetilde C n}$ and $|x_{j}|< \frac{\lambda}{\widetilde C n}$.  Thus, look at 
    \begin{align}
         \frac{\partial H_{sc}(\x)}{\partial x_j} =& -\frac{\lambda L_f\widetilde C n }{L}\left[
\Psi\Big(-\frac{\widetilde C n}{\lambda}x_{j-1}\Big)\Phi^\prime\Big(-\frac{\widetilde C n}{\lambda}x_j\Big) + \Psi\Big(\frac{\widetilde C n}{\lambda}x_{j-1}\Big)\Phi^\prime\Big(\frac{\widetilde C n}{\lambda}x_j\Big)
\right] \nonumber
\\ &- \frac{\lambda L_f \widetilde C n }{L}\left[
\Psi^\prime\Big(-\frac{\widetilde C n}{\lambda}x_{j}\Big)\Phi\Big(-\frac{\widetilde C n}{\lambda}x_{j+1}\Big) + \Psi^\prime\Big(\frac{\widetilde C n}{\lambda}x_{j}\Big)\Phi\Big(\frac{\widetilde C n}{\lambda}x_{j+1}\Big)
\right].
    \end{align}
Then, if $\big|x_j - (1/L_h)\frac{\partial H_{sc}(\x)}{\partial x_j}\big|\leq r_x\lambda/n$, then we have 
\begin{align*}
            L_h\|\pp_\X[\x-(1/L_h)\nabla H_{sc}(\x)] - \x\|_2 \geq \left | \frac{\partial H_{sc}(\x)}{\partial x_j}\right |\geq \frac{\lambda L_f \widetilde C n}{L}. 
\end{align*}
Otherwise, i.e., $\big|x_j - (1/L_h)\frac{\partial H_{sc}(\x)}{\partial x_j}\big|> r_x\lambda/n$, we have 
\begin{align*}
    L_h\|\pp_\X[\x-(1/L_h)\nabla H_{sc}(\x)] - \x\|_2  |\geq &
    L_h \left | \pp_{\C_{r_x\lambda/n}^1}\left[x_j-(1/L_h)\frac{\partial H_{sc}(\x)}{\partial x_j}\right] - x_j \right |
    \\ \geq & L_h \left(\frac{r_x\lambda}{n} -|x_j|\right) \geq L_h\left(\frac{r_x\lambda}{n} - \frac{\lambda}{\widetilde C n}\right) 
    \\\overset{(i)}\geq & \frac{c_0 n^2L_f}{L} \left(r_x - \frac{1}{\widetilde C}\right)  \frac{\lambda}{n} = \frac{c_0 L_f }{L} \left(r_x - \frac{1}{\widetilde C}\right) n\lambda,
\end{align*}
where $(i)$ follows from $(c)$ of \Cref{le:propsc}. Combining the above two cases completes the proof.

\end{document}